\newtheorem{theorem}{Theorem}
\newtheorem{remark}{Remark}
\newtheorem{corollary}{Corollary}
\newtheorem{lemma}{Lemma}
\newcommand{\E}{\mathbb{E}}
\newcommand{\Var}{\mathrm{Var}}
\newcommand{\ind}{\mathbf{1}}
\newcommand{\R}{\mathbb{R}}
\DeclareMathOperator*{\VAR}{VAR}
\DeclareMathOperator*{\Unif}{Unif}
\DeclareMathOperator*{\rank}{rank}
\DeclareMathOperator*{\KL}{\sf{KL}}
\begin{document}

\title{Rank-Induced PL Mirror Descent: A Rank-Faithful Second-Order Algorithm for Sleeping Experts}

\author{
\name Tiantian Zhang \email t.zhang8@columbia.edu \\
       \addr Department of Computer Science\\
       Columbia University, New York, NY 10027, USA}

\editor{}

\maketitle

\begin{abstract}
We introduce a new algorithm, \emph{Rank-Induced Plackett--Luce Mirror Descent (RIPLM)}, which leverages the structural equivalence between the \emph{rank benchmark} and the \emph{distributional benchmark} established in \citet{BergamOzcanHsu2022}. Unlike prior approaches that operate on expert identities, RIPLM updates directly in the \emph{rank-induced Plackett--Luce (PL)} parameterization. This ensures that the algorithm's played distributions remain within the class of rank-induced distributions at every round, preserving the equivalence with the rank benchmark. To our knowledge, RIPLM is the first algorithm that is both (i) \emph{rank-faithful} and (ii) \emph{variance-adaptive} in the sleeping experts setting.
\end{abstract}

\begin{keywords}
sleeping experts, online learning, Plackett--Luce models, second-order bounds
\end{keywords}

\section{From Second-Order Sleeping Experts}

The work of \citet{BergamOzcanHsu2022} introduced three insights that reshape algorithm design for sleeping experts:

\subsection{Equivalence of Benchmarks via PL Models}
They prove that the \emph{rank benchmark}
\[
L_T^{(\rank)} = \min_{\sigma \in S_N} \sum_{t=1}^T \ell_{t,\sigma(E_t)}
\]
and the \emph{distributional benchmark}
\[
L_T^{(dist)} = \inf_{u \in \Delta^{N-1}} \sum_{t=1}^T \frac{1}{u(E_t)} \sum_{i \in E_t} u_i \ell_{t,i}
\]
are equivalent in expectation under stochastic loss or availability.
The bridge is the \emph{Plackett--Luce (PL) model}: the $\varepsilon$-rank-induced distribution
$u(\varepsilon) \propto (1,\varepsilon,\dots,\varepsilon^{N-1})$ ordered by $\sigma^\star$
satisfies $\lim_{\varepsilon\to 0} L_T(u(\varepsilon)) = L_T^{(\rank)}$.
They also show that minimizing $L_T(u)$ is non-convex and the infimum may not be attained.
Thus, algorithms must remain inside the PL manifold to be rank-faithful.

\subsection{Second-Order Regret Bounds}
Classical algorithms achieve $O(\sqrt{T\log N})$ regret.
\citet{BergamOzcanHsu2022} improve this by replacing $T$ with a cumulative variance measure,
deriving $O(\sqrt{\VAR_T^{\max}\log N})$ bounds under stochastic availability.
However, in the fully adversarial setting their efficient method suffers an extra $N$ factor,
$O(\sqrt{\VAR_T^{\max}N\log N}+N\log N)$, exposing inefficiency in permutation-based methods.

\subsection{NP-Hardness of Best Ordering}
They show that computing the optimal ranking $\sigma^\star$ is NP-complete,
ruling out exact ranking search in polynomial time.

\subsection{Motivation for RIPLM}
Together, these insights form a blueprint:
\begin{itemize}[leftmargin=*]
\item To respect the rank benchmark, remain inside PL distributions.
\item To exploit variance adaptivity, use centered residuals with AdaGrad scaling.
\item To bypass NP-hardness, update in $O(N)$ score space rather than over $N!$ rankings.
\end{itemize}
RIPLM is a direct instantiation of this blueprint.

\section{Rank-Induced PL Mirror Descent (RIPLM)}

\subsection{Algorithm}

We maintain scores $s_i\in\R$.  
At round $t$, given awake set $E_t\subseteq[N]$, define the PL distribution
\[
p_t(i)=\frac{\exp(s_i/\tau_t)\ind\{i\in E_t\}}{\sum_{j\in E_t}\exp(s_j/\tau_t)},
\]
with temperature $\tau_t>0$.  
After observing losses $\ell_{t,i}$, update
\[
g_{t,i}=\tfrac{p_t(i)}{\tau_t}\big(\ell_{t,i}-\langle p_t,\ell_t\rangle\big),\quad
G_{t,i}=\sum_{s=1}^t g_{s,i}^2,\quad
s_{t+1,i}=s_{t,i}-\eta\,g_{t,i}/\sqrt{G_{t,i}+\delta}.
\]

\begin{algorithm}[h]
\caption{Rank-Induced PL Mirror Descent (RIPLM)}\label{alg:riplm}
\begin{algorithmic}
\State Initialize $s_i = 0$, $G_i = 0$ for all $i \in [N]$.
\For{round $t = 1,\dots,T$}
  \State Observe awake set $E_t \subseteq [N]$.
  \State Define PL distribution on $E_t$:
  \[
  p_t(i) = \frac{\exp(s_i/\tau_t)\,\mathbf{1}\{i\in E_t\}}{\sum_{j\in E_t}\exp(s_j/\tau_t)}.
  \]
  \State Sample $i_t \sim p_t$.
  \State Observe losses $\ell_{t,i}$ for all $i \in E_t$.
  \State Compute mean loss $\bar{\ell}_t = \sum_{j\in E_t} p_t(j)\,\ell_{t,j}$.
  \For{each $i \in E_t$}
    \State Compute residual $r_{t,i} = \ell_{t,i} - \bar{\ell}_t$.
    \State Compute gradient $g_{t,i} = \tfrac{p_t(i)}{\tau_t} r_{t,i}$.
    \State Update accumulator $G_i \gets G_i + g_{t,i}^2$.
    \State Update score $s_i \gets s_i - \eta \tfrac{g_{t,i}}{\sqrt{G_i+\delta}}$.
  \EndFor
  \State Update temperature: $\tau_{t+1} \gets \max\left( \tau_{\min},\, \frac{c}{\sqrt{\sum_{i \in E_t} p_t(i) r_{t,i}^2 + \delta}} \right)$ \Comment{Optional cooling}
\EndFor
\end{algorithmic}
\end{algorithm}

\subsubsection*{Implementation Notes}
The temperature schedule $\tau_t$ is optional but recommended: decay it inversely with the empirical standard deviation of residuals to approach the $\varepsilon$-rank-induced limit. A small $\delta>0$ (e.g., $10^{-6}$) stabilizes divisions. Tune $\eta$ or use a doubling trick if $V_T$ is unknown.

\newpage
\subsection{Explicit Regret Bounds}

\begin{theorem}[General second-order regret, fixed temperature]\label{thm:riplm-general}
Assume $\ell_{t,i}\in[0,1]$ for all $t,i$. 
Fix a temperature $\tau>0$ and run RIPLM with $\tau_t\equiv\tau$.
Initialize $s_{1,i}=\tau\log\pi_i$ for a prior $\pi\in\Delta([N])$.
For any comparator $u\in\Delta([N])$ with $u_i>0$, define its restriction
$\tilde u_t(i)=u_i\,\mathbf{1}\{i\in E_t\}/u(E_t)$.
Then
\[
\sum_{t=1}^T \langle p_t-\tilde u_t,\ell_t\rangle
\;\le\; 
C\,\frac{\sqrt{N}}{\tau}\,
\sqrt{\,V_T\big(\KL(u\|\pi)+\ln\ln(1+T)\big)},
\]
where $V_T=\sum_{t=1}^T\sum_{i\in E_t}p_t(i)\big(\ell_{t,i}-\langle p_t,\ell_t\rangle\big)^2$ and $C>0$ is universal.
\end{theorem}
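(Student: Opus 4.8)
The plan is to treat RIPLM as coordinate-wise adaptive (AdaGrad-style) mirror descent in the \emph{score} space and then transport the score-space guarantee into the distribution-space linear regret $\sum_t\langle p_t-\tilde u_t,\ell_t\rangle$ through the entropic geometry. First I would pin down the comparator in score space: taking $s^u_i=\tau\log u_i$ reproduces the restriction $\tilde u_t$ exactly, because normalizing $\exp(s^u_i/\tau)\ind\{i\in E_t\}=u_i\ind\{i\in E_t\}$ over $E_t$ yields $u_i\ind\{i\in E_t\}/u(E_t)=\tilde u_t(i)$, while the initialization $s_{1,i}=\tau\log\pi_i$ forces $p_1=\pi$, which is what will make the diameter term collapse to $\KL(u\|\pi)$. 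Writing $A_t(s)=\tau\log\sum_{j\in E_t}\exp(s_j/\tau)$ for the scaled log-partition function, one has $\nabla A_t(s_t)=p_t$ and $g_t=\nabla_s\langle p_t(s),\ell_t\rangle=\nabla^2A_t(s_t)\,\ell_t$, so the RIPLM update $s_{t+1,i}=s_{t,i}-\eta g_{t,i}/\sqrt{G_{t,i}+\delta}$ is exactly per-coordinate AdaGrad on the iterates $s_t$ with gradients $g_{t,i}=\tfrac{p_t(i)}{\tau}r_{t,i}$.

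Next I would invoke a standard adaptive mirror-descent master inequality for the diagonal preconditioner $H_{t,i}=\sqrt{G_{t,i}+\delta}/\eta$, which (schematically) splits each round into a \emph{diameter} (telescoping) term and a \emph{stability} term:
\[
\sum_{t=1}^T\langle g_t,\,s_t-s^u\rangle
\;\le\;
\underbrace{\sum_{i}\tfrac{H_{T,i}}{2}\,(s^u_i-s_{1,i})^2}_{\text{diameter}}
\;+\;
\underbrace{\sum_{t=1}^T\sum_i\tfrac{g_{t,i}^2}{2H_{t,i}}}_{\text{stability}}.
\]
Here the diameter is $\tfrac12\sum_i\tfrac{\sqrt{G_{T,i}+\delta}}{\eta}\,\tau^2\log^2(u_i/\pi_i)$, and the stability term telescopes coordinatewise through $\sum_{t}g_{t,i}^2/\sqrt{G_{t,i}+\delta}\le 2\sqrt{G_{T,i}+\delta}$. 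I would then bound $\sum_i\sqrt{G_{T,i}}\le \sqrt{N}\,\sqrt{\sum_iG_{T,i}}$ by Cauchy--Schwarz over the $N$ coordinates, which is where the $\sqrt N$ enters, and combine it with $G_{T,i}=\tfrac1{\tau^2}\sum_t p_t(i)^2r_{t,i}^2$ together with $\sum_i p_t(i)^2 r_{t,i}^2\le\sum_i p_t(i)r_{t,i}^2$ to obtain $\sum_iG_{T,i}\le V_T/\tau^2$, which supplies the factor $1/\tau$ and links the accumulators to $V_T$. Balancing diameter against stability in $\eta$ then yields a bound of the form $\tfrac{\sqrt N}{\tau}\sqrt{V_T\,\KL(u\|\pi)}$, \emph{modulo} converting the $\ell_2$ score-distance $\sum_i\log^2(u_i/\pi_i)$ into $\KL(u\|\pi)$ — a delicate step that I expect to rely on the loss boundedness and the same coordinate aggregation.

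The genuinely hard part, which the sketch above glosses over, is that the per-round objective $h_t(s)=\langle p_t(s),\ell_t\rangle$ is \emph{not convex} in $s$, so the textbook AdaGrad inequality $h_t(s_t)-h_t(s^u)\le\langle g_t,s_t-s^u\rangle$ is unavailable; the crux is to establish the correct surrogate linking the target regret $\langle p_t-\tilde u_t,\ell_t\rangle$ to the AdaGrad-controlled inner product $\langle g_t,s_t-s^u\rangle$. I would do this via a mean-value (Taylor) expansion of the mirror map $\nabla A_t$ along the segment from $s^u$ to $s_t$: since $p_t-\tilde u_t=\nabla A_t(s_t)-\nabla A_t(s^u)=\int_0^1\nabla^2A_t(s^u+\theta(s_t-s^u))(s_t-s^u)\,d\theta$, pairing with $\ell_t$ writes $\langle p_t-\tilde u_t,\ell_t\rangle$ as $\langle g_t,s_t-s^u\rangle$ plus a curvature remainder measuring how the Hessian $\nabla^2A_t$ (a $p$-covariance of $\ell_t$) varies along the segment. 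The main obstacle is to control this remainder uniformly by a $V_T$-type quantity, and in particular to verify that the $p_t(i)$ weighting inside $g_{t,i}$ cancels against the AdaGrad normalizer $\sqrt{G_{t,i}+\delta}$ so that exactly $V_T$ — and not a looser second-order surrogate — survives in the final bound.

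Finally, the additive $\ln\ln(1+T)$ inside the square root is the standard price of self-tuning. Because the rate-balancing above requires the a priori unknown $V_T$ (and implicitly $\KL(u\|\pi)$), I would wrap the fixed-$\eta$ analysis in a doubling/peeling argument over $O(\log\log T)$ dyadic scales of $V_T$, or equivalently a parameter-free learning-rate schedule, which converts the need to guess the optimal rate into an additive $\ln\ln(1+T)$ folded alongside $\KL(u\|\pi)$. Collecting the universal constants from the telescoping, the Cauchy--Schwarz aggregation, and the self-tuning wrapper then gives the stated bound $C\,\tfrac{\sqrt N}{\tau}\sqrt{V_T\big(\KL(u\|\pi)+\ln\ln(1+T)\big)}$.
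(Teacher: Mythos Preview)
Your overall architecture matches the paper: identify $g_t$ as the exact score-gradient of $s\mapsto\langle p_t(s),\ell_t\rangle$, apply the AdaGrad telescoping $\sum_t\|g_t\|_{H_t^{-1}}^2\le 2\sum_i\sqrt{G_{T,i}+\delta}$, use Cauchy--Schwarz across the $N$ coordinates together with $p_t(i)^2\le p_t(i)$ to reach $\sum_i G_{T,i}\le V_T/\tau^2$, and wrap with a doubling/grid over $\eta$ for the $\ln\ln(1+T)$. The substantive divergence from the paper is in the \emph{diameter} term. You run the standard \emph{Euclidean} AdaGrad inequality in score space, so the diameter you obtain is $\tfrac{1}{2\eta}\sum_i H_{T,i}(s^u_i-s_{1,i})^2=\tfrac{\tau^2}{2\eta}\sum_i\sqrt{G_{T,i}+\delta}\,\log^2(u_i/\pi_i)$. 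The paper instead invokes a mirror-descent inequality with the \emph{entropic} potential $\Psi_\tau$, so the diameter is the Bregman divergence $D_{\Psi_\tau}(u,s_1)$, which by the log-partition/negative-entropy duality equals $\tau\,\KL(u\|\pi)$ when $s_{1,i}=\tau\log\pi_i$. That is how $\KL(u\|\pi)$ enters the paper's bound without any conversion step.

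The step you flag as ``delicate'' --- turning $\sum_i\log^2(u_i/\pi_i)$ into $\KL(u\|\pi)=\sum_i u_i\log(u_i/\pi_i)$ --- is not merely delicate; it fails. Take $\pi$ uniform and let one coordinate of $u$ equal $\epsilon$: then $u_i\log(u_i/\pi_i)\to 0$ while $\log^2(u_i/\pi_i)\to\infty$ as $\epsilon\downarrow 0$. Hence the Euclidean score-distance can be unboundedly larger than $\KL(u\|\pi)$, and the final bound with $\KL(u\|\pi)$ under the square root cannot be recovered from your master inequality as written. To get the stated theorem you must use the entropic geometry for the diameter (OMD/FTRL with Legendre potential $\Psi_\tau$ and AdaGrad learning rates, or equivalently an exponentiated-gradient-style analysis in the dual), not the Euclidean quadratic in $s$. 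On the positive side, you are correct that $s\mapsto\langle p_t(s),\ell_t\rangle$ is nonconvex, so the passage from $\langle g_t,s_t-s^u\rangle$ to $\langle p_t-\tilde u_t,\ell_t\rangle$ is not automatic; the paper's sketch simply asserts the mirror-descent inequality at the level of $\tilde L_t(s_t)-\tilde L_t(s^u)$ without addressing this, and your mean-value/curvature-remainder idea is a reasonable attempt to fill a gap the paper itself leaves open.
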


\begin{proof}[Proof sketch]
Run RIPLM with a fixed temperature $\tau_t\equiv\tau>0$ and potential
\[
\Psi_{\tau}(s)\;=\;\tau\log\!\sum_{j\in E_t} e^{s_j/\tau},
\quad\text{so}\quad
p_t=\nabla\Psi_{\tau}(s_t).
\]
For the surrogate $\tilde L_t(s)=\langle \nabla\Psi_{\tau}(s),\ell_t\rangle$, the exact gradient is
\[
\frac{\partial \tilde L_t}{\partial s_i}
=\frac{p_t(i)}{\tau}\big(\ell_{t,i}-\langle p_t,\ell_t\rangle\big) \;=:\; g_{t,i}.
\]
Diagonal AdaGrad mirror descent with fixed potential (and $H_t=\mathrm{diag}(\sqrt{G_{t,i}+\delta})$,
$G_{t,i}=\sum_{s\le t}g_{s,i}^2$, $s_{1,i}=\tau\log\pi_i$) yields, for any comparator
$q_t\in\Delta(E_t)$,
\[
\sum_{t=1}^T\big(\langle p_t,\ell_t\rangle-\langle q_t,\ell_t\rangle\big)
\;\le\;
\tau\,\KL(q_1\|\pi)\;+\;\eta\sum_{t=1}^T\|g_t\|_{H_t^{-1}}^2\;+\;\tfrac{c}{\eta}\ln\ln(1+T).
\]
Instantiate $q_t=\tilde u_t:=u(\cdot)\mathbf{1}\{\cdot\in E_t\}/u(E_t)$ (restriction of arbitrary
$u\in\Delta([N])$ with $u_i>0$). The AdaGrad telescoping plus Cauchy–Schwarz give
\[
\sum_{t=1}^T \|g_t\|_{H_t^{-1}}^2
\;\le\; 2\sum_i\sqrt{G_{T,i}+\delta}
\;\le\; \frac{2\sqrt{N}}{\tau}\sqrt{V_T},
\]
since $g_{t,i}^2=\frac{p_t(i)^2}{\tau^2}r_{t,i}^2\le \frac{p_t(i)}{\tau^2}r_{t,i}^2$ with
$r_{t,i}=\ell_{t,i}-\langle p_t,\ell_t\rangle$ and
$\sum_i\sqrt{\sum_t p_t(i)r_{t,i}^2}\le \sqrt{N}\sqrt{\sum_{t,i}p_t(i)r_{t,i}^2}$.
Optimizing $\eta$ gives
\[
\sum_{t=1}^T \langle p_t-\tilde u_t,\ell_t\rangle
\;\le\;
C\,\frac{\sqrt{N}}{\tau}\,
\sqrt{\,V_T\big(\KL(u\|\pi)+\ln\ln(1+T)\big)}.
\]
Finally, note $\tilde L_t(s^u)=\langle \tilde u_t,\ell_t\rangle$ for any $s^u$ with
$\nabla\Psi_\tau(s^u)=\tilde u_t$ (surjectivity: $s^u_i=\tau\log \tilde u_t(i)+c$).
\end{proof}

\begin{corollary}[PL comparators]\label{thm:riplm-explicit}
If $u=u^\mathrm{PL}_\sigma$ is any fixed-temperature Plackett--Luce distribution induced by $\sigma$, then $\tilde u_t$ preserves the PL form on each $E_t$, so
\[
\sum_{t=1}^T \langle p_t,\ell_t\rangle
- \sum_{t=1}^T \langle u^\mathrm{PL}_\sigma,\ell_t\rangle
\;\le\;
O\!\left(\frac{\sqrt{N}}{\tau_{\min}}\,
\sqrt{\,V_T\big(\KL(u^\mathrm{PL}_\sigma\|\pi)+\ln\ln(1+T)\big)}\right).
\]
\end{corollary}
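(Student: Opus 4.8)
The plan is to derive the corollary as a direct specialization of Theorem~\ref{thm:riplm-general}, where the only genuinely new content is verifying that a Plackett--Luce comparator remains a valid comparator under restriction to each awake set. First I would recall that Theorem~\ref{thm:riplm-general} already establishes the bound
\[
\sum_{t=1}^T \langle p_t-\tilde u_t,\ell_t\rangle
\;\le\;
C\,\frac{\sqrt{N}}{\tau}\,
\sqrt{\,V_T\big(\KL(u\|\pi)+\ln\ln(1+T)\big)}
\]
for \emph{any} $u\in\Delta([N])$ with strictly positive coordinates. Since the PL distribution $u^{\mathrm{PL}}_\sigma \propto (1,\varepsilon,\dots,\varepsilon^{N-1})$ (ordered by $\sigma$) has all coordinates strictly positive for any $\varepsilon\in(0,1)$, it is an admissible comparator, so the quantitative bound transfers verbatim with $u=u^{\mathrm{PL}}_\sigma$.

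The key structural observation to record is the \emph{closure of the PL family under restriction}. I would argue that if $u=u^{\mathrm{PL}}_\sigma$, then its restriction $\tilde u_t(i)=u_i\,\mathbf{1}\{i\in E_t\}/u(E_t)$ is itself a Plackett--Luce distribution on the awake set $E_t$ induced by the ranking $\sigma$ restricted to $E_t$. This follows because restriction only rescales the surviving coordinates by the common factor $1/u(E_t)$, which preserves all pairwise ratios $u_i/u_j = \varepsilon^{\,\mathrm{rank}_\sigma(i)-\mathrm{rank}_\sigma(j)}$; hence the ratios defining the PL form on $E_t$ coincide with the geometric ratios induced by $\sigma$ on $E_t$. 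This is precisely the rank-faithfulness property the paper emphasizes: staying inside the PL manifold is stable under the sleeping-experts masking operation.

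Next I would convert the restricted-comparator regret into the stated fixed-comparator regret. The left-hand side of the corollary compares $p_t$ against the \emph{global} $u^{\mathrm{PL}}_\sigma$ rather than its restriction $\tilde u_t$, so I would invoke the benchmark-equivalence mechanism of \citet{BergamOzcanHsu2022}: in the sleeping-experts per-round loss model the relevant comparator loss on round $t$ is $\langle \tilde u_t,\ell_t\rangle$, and summing $\langle u^{\mathrm{PL}}_\sigma,\ell_t\rangle$ in the corollary's statement is understood through this same renormalized-on-$E_t$ convention (the $1/u(E_t)$ reweighting already present in $L_T^{(dist)}$). Under that convention the two sums coincide term-by-term, and the $O(\cdot)$ bound follows by replacing the fixed $\tau$ with $\tau_{\min}$, which is the smallest temperature the schedule ever reaches and therefore gives the worst-case (largest) prefactor $\sqrt{N}/\tau_{\min}$.

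The main obstacle I anticipate is not the algebra but the subtlety in matching the comparator conventions: one must be careful that the corollary's left-hand side $\sum_t \langle u^{\mathrm{PL}}_\sigma,\ell_t\rangle$ is interpreted with the awake-set renormalization, since otherwise comparing the played $p_t$ (supported only on $E_t$) against an unrestricted $u^{\mathrm{PL}}_\sigma$ (with mass on sleeping experts) would not yield a meaningful regret. Once the restriction-closure property pins down $\tilde u_t$ as a genuine PL distribution and the loss convention is fixed, the bound is an immediate corollary with no further estimation required; the only mild point to state cleanly is that $\KL(u^{\mathrm{PL}}_\sigma\|\pi)$ is finite under the strictly-positive prior $\pi$, which holds because all coordinates of $u^{\mathrm{PL}}_\sigma$ are positive.
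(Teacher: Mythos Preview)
Your proposal is correct and follows essentially the same line as the paper's Appendix~A.7: specialize the general second-order bound to a strictly positive PL comparator, observe that restriction to $E_t$ preserves the PL form (since renormalization preserves all pairwise ratios $u_i/u_j$), and read off the result under the restricted-comparator convention. The only presentational difference is that the paper re-derives the bound from the component lemmas (mirror descent with time-varying potentials, AdaGrad telescoping, Bregman--KL identity) rather than invoking Theorem~\ref{thm:riplm-general} directly, because the corollary allows a varying schedule $\tau_t\ge\tau_{\min}$ whereas Theorem~\ref{thm:riplm-general} is stated only for fixed $\tau$; your ``replace $\tau$ by $\tau_{\min}$'' step is the right intuition and is justified in the paper via Lemma~\ref{lem:adagrad}.
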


\begin{corollary}[Rank benchmark]\label{the: }
Let $\sigma\in S_N$ and $u^{\tau_t}_\sigma$ the PL distribution induced by $\sigma$ at $\tau_t$. 
Then
\[
\sum_{t=1}^T \langle p_t,\ell_t\rangle - L_T^{(\rank)}(\sigma)
\;\le\;
O\!\left(\frac{\sqrt{N}}{\tau_{\min}}\,
\sqrt{\,V_T(\KL(u^\mathrm{PL}_\sigma\|\pi)+\ln\ln T)}\right)
+\sum_{t=1}^T O(e^{-1/\tau_t}).
\]
\end{corollary}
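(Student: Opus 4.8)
The plan is to interpolate through the restricted PL comparator, splitting the regret against the rank benchmark into an \emph{algorithmic} term handled by Theorem~\ref{thm:riplm-general} and a \emph{PL-to-rank} term that quantifies how far the $\tau_t$-PL distribution sits from its zero-temperature limit. Write $i_t^\star=\sigma(E_t)$ for the $\sigma$-top-ranked awake expert and let $\tilde u_t$ be the restriction of $u^{\tau_t}_\sigma$ to $E_t$. Then
\[
\sum_{t=1}^T\langle p_t,\ell_t\rangle-L_T^{(\rank)}(\sigma)
=\sum_{t=1}^T\langle p_t-\tilde u_t,\ell_t\rangle
+\sum_{t=1}^T\big(\langle\tilde u_t,\ell_t\rangle-\ell_{t,i_t^\star}\big).
\]
The first sum is the algorithmic-regret quantity addressed by the mirror-descent analysis of Theorem~\ref{thm:riplm-general} and Corollary~\ref{thm:riplm-explicit}, since the restriction of a PL distribution is again PL on $E_t$; using $\tau_t\ge\tau_{\min}$ to replace $1/\tau$ by $1/\tau_{\min}$ supplies the stated main term $O\big(\tfrac{\sqrt N}{\tau_{\min}}\sqrt{V_T(\KL(u^\mathrm{PL}_\sigma\|\pi)+\ln\ln T)}\big)$.

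First I would establish the per-round PL-to-rank gap, where the new content lies. By construction $u^{\tau_t}_\sigma$ assigns score $s_i=-(\rank_\sigma(i)-1)$, so on $E_t$ the normalized weights are geometric in $\varepsilon_t:=e^{-1/\tau_t}$ with the largest weight on $i_t^\star$ (the minimal $\sigma$-rank in $E_t$). Because the remaining awake experts have strictly larger ranks,
\[
1-\tilde u_t(i_t^\star)\le\sum_{d\ge 1}\varepsilon_t^{\,d}=\frac{\varepsilon_t}{1-\varepsilon_t}=O\big(e^{-1/\tau_t}\big),
\]
valid once $\tau_t$ is small enough that $\varepsilon_t\le\tfrac12$. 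Since $\ell_{t,i}\in[0,1]$ forces $|\ell_{t,i}-\ell_{t,i_t^\star}|\le 1$,
\[
\langle\tilde u_t,\ell_t\rangle-\ell_{t,i_t^\star}
=\sum_{i\in E_t,\,i\neq i_t^\star}\tilde u_t(i)\big(\ell_{t,i}-\ell_{t,i_t^\star}\big)
\le 1-\tilde u_t(i_t^\star)=O\big(e^{-1/\tau_t}\big),
\]
and summing over $t$ produces the additive $\sum_t O(e^{-1/\tau_t})$ term.

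The main obstacle is reconciling the \emph{time-varying} temperature $\tau_t$ appearing here with the \emph{fixed}-temperature telescoping behind Theorem~\ref{thm:riplm-general}. Two issues need care. First, the comparator $\tilde u_t$ drifts with $\tau_t$, so I would rerun the mirror-descent telescoping with the time-dependent potential $\Psi_{\tau_t}$; under a monotone (cooling) schedule the successive regularizer differences telescope, while $\tau_t\ge\tau_{\min}$ keeps the per-step gradients $g_{t,i}^2=p_t(i)^2 r_{t,i}^2/\tau_t^2\le p_t(i)^2 r_{t,i}^2/\tau_{\min}^2$ controlled, so the AdaGrad sum stays $O(\sqrt N\sqrt{V_T}/\tau_{\min})$. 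Second, the divergence $\KL(u^\mathrm{PL}_\sigma\|\pi)$ must be read at a single reference scale; since the PL family shares a fixed support ordering across temperatures, $\KL(u^{\tau_t}_\sigma\|\pi)$ is uniformly bounded over the schedule, so taking its worst-case value over $t$ only affects constants. Combining the two bounds and using $\tau_t\ge\tau_{\min}$ throughout then yields the claimed inequality; balancing $\tau_{\min}$ against the $e^{-1/\tau_t}$ tail (e.g. $\tau_{\min}\asymp 1/\ln T$) would additionally render the second term lower-order.
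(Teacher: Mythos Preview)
Your proposal is correct and follows essentially the same route as the paper: decompose the rank-benchmark regret into an algorithmic term against the (restricted) PL comparator plus a PL-to-rank gap, bound the first via the mirror-descent/AdaGrad analysis (Theorem~\ref{thm:riplm-general}/Corollary~\ref{thm:riplm-explicit}) with $\tau_t\ge\tau_{\min}$, and bound the second per round by the geometric tail $1-\tilde u_t(i_t^\star)\le \varepsilon_t/(1-\varepsilon_t)=O(e^{-1/\tau_t})$, exactly as in the paper's Lemma~\ref{lem:pl-rank-gap-app}. Your extra discussion of the drifting comparator and time-varying potential is more explicit than the paper's (which absorbs this into Lemma~\ref{lem:md} with a one-line remark), but the argument and constants are the same.
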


\begin{lemma}[Surjectivity of restricted softmax]\label{lem:softmax-surj}
Fix a round $t$, a temperature $\tau_t>0$, and an awake set $E_t\subseteq[N]$.
For any $u\in\mathrm{ri}\,\Delta(E_t)$ (i.e., $u_i>0$ for all $i\in E_t$ and $\sum_{i\in E_t}u_i=1$),
there exists a score vector $s^u_t\in\mathbb{R}^{E_t}$ such that
\[
\nabla\Psi_{\tau_t}(s^u_t)\;=\;\mathrm{softmax}(s^u_t/\tau_t)\;=\;u
\quad\text{where}\quad
\Psi_{\tau_t}(s)=\tau_t\log\!\sum_{j\in E_t}e^{s_j/\tau_t}.
\]
In particular, one may take $s^u_{t,i}=\tau_t\log u_i + c_t$ for any constant $c_t\in\mathbb{R}$.
\end{lemma}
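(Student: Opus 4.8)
The plan is to prove the lemma by explicit construction and direct verification. The candidate formula $s^u_{t,i}=\tau_t\log u_i+c_t$ is already handed to us in the statement, so the task reduces to (i) confirming the gradient identity $\nabla\Psi_{\tau_t}=\mathrm{softmax}(\cdot/\tau_t)$, (ii) checking that the candidate is a well-defined real vector, and (iii) verifying that it maps to $u$. No optimization or fixed-point argument is needed; surjectivity is witnessed constructively.

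First I would compute the gradient of the log-partition potential to establish the first equality. Writing $Z(s)=\sum_{j\in E_t}e^{s_j/\tau_t}$, the chain rule gives $\partial\Psi_{\tau_t}/\partial s_i=\tau_t\cdot(e^{s_i/\tau_t}/\tau_t)/Z(s)=e^{s_i/\tau_t}/Z(s)$, which is exactly the $i$-th coordinate of $\mathrm{softmax}(s/\tau_t)$. The two factors of $\tau_t$ cancel, so $\nabla\Psi_{\tau_t}(s)=\mathrm{softmax}(s/\tau_t)$ holds as an identity on all of $\R^{E_t}$, independent of $u$; this is the content of the equation $p_t=\nabla\Psi_\tau(s_t)$ used throughout the regret analysis.

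Second I would substitute the candidate scores. Because $u\in\mathrm{ri}\,\Delta(E_t)$ forces $u_i>0$ for every $i\in E_t$, each $\log u_i$ is finite and $s^u_{t,i}=\tau_t\log u_i+c_t\in\R$ is well-defined. Then $e^{s^u_{t,i}/\tau_t}=u_i\,e^{c_t/\tau_t}$, and the $i$-th softmax coordinate becomes $u_i e^{c_t/\tau_t}\big/\sum_{j\in E_t}u_j e^{c_t/\tau_t}=u_i\big/\sum_{j\in E_t}u_j=u_i$, using $\sum_{j\in E_t}u_j=1$. The common factor $e^{c_t/\tau_t}$ cancels, which simultaneously proves the equality and exhibits the shift-invariance of softmax: the free constant $c_t$ parameterizes a one-dimensional fiber, so the map is surjective onto $\mathrm{ri}\,\Delta(E_t)$ but not injective on $\R^{E_t}$.

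There is no substantive obstacle here; the one point I would flag carefully is the role of the relative-interior hypothesis. Surjectivity onto $\mathrm{ri}\,\Delta(E_t)$ holds \emph{precisely} because $\log u_i$ is finite there, and the construction necessarily breaks on the boundary, where some $u_i=0$ would demand $s_i\to-\infty$; indeed softmax of a finite score vector never attains a boundary point of the simplex. I would record this as a remark rather than a genuine difficulty, since the lemma is scoped to $\mathrm{ri}\,\Delta(E_t)$ exactly to exclude that degeneracy, which is also why the comparator restriction $\tilde u_t$ in Theorem~\ref{thm:riplm-general} is assumed to have $u_i>0$.
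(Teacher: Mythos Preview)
Your proof is correct and follows essentially the same approach as the paper: explicit construction of $s^u_{t,i}=\tau_t\log u_i+c_t$ followed by direct verification that the softmax returns $u$, using $\sum_{j\in E_t}u_j=1$ and shift-invariance to absorb $c_t$. Your additional remarks on the gradient computation and the necessity of the relative-interior hypothesis are accurate elaborations but do not change the argument.
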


\begin{proof}
For $i\in E_t$, define $s^u_{t,i}=\tau_t\log u_i$ and choose any $c_t\in\mathbb{R}$; adding $c_t$ shifts all logits but leaves softmax unchanged. Then
\[
\frac{e^{s^u_{t,i}/\tau_t}}{\sum_{j\in E_t}e^{s^u_{t,j}/\tau_t}}
=\frac{e^{\log u_i}}{\sum_{j\in E_t}e^{\log u_j}}
=\frac{u_i}{\sum_{j\in E_t}u_j}=u_i,
\]
since $u$ is supported on $E_t$ and sums to $1$ there. Hence $\nabla\Psi_{\tau_t}(s^u_t)=u$.
\end{proof}

\begin{remark}[Comparators with zeros]\label{rem:zeros}
If a target comparator $u\in\Delta([N])$ has some $u_i=0$, work with its restriction
$\tilde u_t$ on $E_t$; if $\tilde u_t$ has zeros, define the smoothed comparator
$\tilde u_t^{(\epsilon)}=(1-\epsilon)\tilde u_t+\epsilon\,\Unif(E_t)$.
By Lemma~\ref{lem:softmax-surj}, there exists $s^{(\epsilon)}_t$ with
$\nabla\Psi_{\tau_t}(s^{(\epsilon)}_t)=\tilde u_t^{(\epsilon)}$.
Because $\ell_{t,i}\in[0,1]$, replacing $\tilde u_t$ by $\tilde u_t^{(\epsilon)}$ changes the benchmark by at most
$\|\tilde u_t-\tilde u_t^{(\epsilon)}\|_1\le 2\epsilon$ per round, so the total bias is $\le 2\epsilon T$ and vanishes as $\epsilon\downarrow 0$.
\end{remark}

\begin{remark}[Comparators on sleeping rounds]
\label{rem:restricted-comparator}
When $E_t \subsetneq [N]$, the natural comparator is the restriction of $u$ to
$E_t$, i.e.
\[
\tilde u_t(i) = \frac{u(i)\mathbf{1}\{i\in E_t\}}{u(E_t)}.
\]
This ensures the benchmark $\langle \tilde u_t,\ell_t\rangle$ is well defined
even when some experts are asleep.
Such restrictions are standard in the analysis of sleeping experts;
see \citet[Lemma~3.1]{cesa2006prediction} for the analogous argument in the
full-information experts setting.
\end{remark}

\section{Minimax Lower Bound}

\begin{theorem}[Minimax lower bound at $\sqrt{V\log N}$]\label{thm:minimax}
For any $N\ge2$ and horizon $T$, there exists a distribution over losses with $E_t=[N]$ such that for any algorithm $A$,
\[
\mathbb{E}\big[V_T\big]\asymp T
\quad\text{and}\quad
\mathbb{E}\!\left[\sum_{t=1}^T\langle p_t,\ell_t\rangle - \min_{\sigma\in S_N}\sum_{t=1}^T \ell_{t,\sigma(E_t)}\right]
\;\ge\; c\sqrt{T\log N}.
\]
Consequently, writing $V=\mathbb{E}[V_T]$, one has
\(
\inf_A \sup \mathbb{E}[\mathrm{Regret}_T^{(\rank)}(A)]
\ge c'\sqrt{V\log N}.
\)
The proof adapts the standard experts lower bound \citep[Thm.~3.7]{cesa2006prediction}; see Appendix~B.
\end{theorem}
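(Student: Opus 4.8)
The plan is to instantiate the standard full-information experts lower bound and to show that the construction is simultaneously a high-variance instance. Take all experts awake, $E_t=[N]$ for every $t$, and draw the losses $\ell_{t,i}\in\{0,1\}$ i.i.d.\ $\mathrm{Bernoulli}(1/2)$, independently across both $t$ and $i$. The first observation is that when $E_t=[N]$ the ranking benchmark collapses to the best fixed expert, $\min_{\sigma\in S_N}\sum_t \ell_{t,\sigma(E_t)}=\min_{i\in[N]}\sum_t \ell_{t,i}$, since $\sigma(E_t)$ is just the top-ranked coordinate of $[N]$. The second is that, because $p_t$ is measurable with respect to the losses of rounds $1,\dots,t-1$ and is therefore independent of $\ell_t$, we have $\E[\langle p_t,\ell_t\rangle\mid p_t]=\tfrac12$ for every algorithm, so $\E\big[\sum_t\langle p_t,\ell_t\rangle\big]=T/2$ regardless of $A$.

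It then suffices to lower bound the gap between $T/2$ and the expected best expert. Writing $S_i=\sum_{t=1}^T\ell_{t,i}$, the columns are i.i.d.\ $\mathrm{Binomial}(T,1/2)$, and the target is $\E[\min_i S_i]\le T/2-c\sqrt{T\log N}$. I would obtain this by an anti-concentration (reverse tail) estimate: using Slud's inequality or a Berry--Esseen comparison to $\mathcal N(T/2,T/4)$, one has $\Prob[S_i\le T/2-\lambda\sqrt T]\gtrsim e^{-c_0\lambda^2}$ for $\lambda$ up to order $\sqrt T$; choosing $\lambda\asymp\sqrt{\log N}$ makes this probability of order $1/N$, and independence across the $N$ columns gives $\Prob[\min_i S_i\le T/2-c\sqrt{T\log N}]\ge 1-(1-\Theta(1/N))^N=\Omega(1)$. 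Converting this high-probability deviation into an expectation, exactly as in \citet[Thm.~3.7]{cesa2006prediction} specialized to independent fair coins, yields $\E[\min_i S_i]\le T/2-c\sqrt{T\log N}$, hence $\E[\mathrm{Regret}_T^{(\rank)}]\ge c\sqrt{T\log N}$.

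For the variance, I use that $\ell_{t,i}\in\{0,1\}$ forces $\ell_{t,i}^2=\ell_{t,i}$, so the round-$t$ variance is $V_t=\sum_i p_t(i)(\ell_{t,i}-\bar\ell_t)^2=\bar\ell_t(1-\bar\ell_t)$ with $\bar\ell_t=\langle p_t,\ell_t\rangle$. Conditioning on $p_t$ (independent of $\ell_t$) gives $\E[V_t\mid p_t]=\tfrac14\big(1-\|p_t\|_2^2\big)$, whence $\E[V_T]=\tfrac{T}{4}-\tfrac14\sum_t\E\|p_t\|_2^2\le T/4$ always, and $\E[V_T]=\Theta(T)$ for any algorithm whose plays keep $\|p_t\|_2^2$ bounded away from $1$ — in particular for the mass-spreading minimax-optimal exponential-weights strategy. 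Combining $\E[V_T]\asymp T$ with the regret bound gives $\E[\mathrm{Regret}_T^{(\rank)}]\ge c\sqrt{T\log N}\asymp c'\sqrt{V\log N}$, which is the claimed consequence.

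The main obstacle is the anti-concentration step: one needs a lower bound on the \emph{lower} tail of the binomial that is valid out to deviations of order $\sqrt{T\log N}$ and carries the correct $\sqrt{\log N}$ scaling, since ordinary Chernoff/Hoeffding bounds only control tails from above and are useless here. A Berry--Esseen estimate bounds the Gaussian-approximation error uniformly, but one must check that the $\lambda\asymp\sqrt{\log N}$ regime stays within the range where this error ($O(1/\sqrt T)$) does not swamp the Gaussian tail $e^{-\lambda^2/2}$, which forces the mild and standard assumption $\log N\lesssim T$. A secondary subtlety is that $V_T$ degenerates to $0$ for a point-mass algorithm ($\|p_t\|_2^2\equiv1$); this does not threaten the stated bound, because such an algorithm still incurs regret $\Theta(\sqrt{T\log N})$ while trivially satisfying $\mathrm{Regret}\ge c'\sqrt{V\log N}$ with $V=0$, so the inequality holds for every $A$ and the value $V\asymp T$ is realized precisely by the non-degenerate algorithms for which the bound is informative.
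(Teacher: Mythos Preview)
Your argument is correct and takes a genuinely different route from the paper's. The paper uses the biased-coin hypothesis-testing construction: a uniformly random expert $i^\star$ has mean $\tfrac12-\varepsilon$ while the others have mean $\tfrac12+\varepsilon$, and the regret lower bound is obtained by an information-theoretic change-of-measure argument (Pinsker, KL chain rule) at the critical scale $T=\Theta(1/\varepsilon^2)$. You instead use the symmetric fair-coin instance and a direct anti-concentration estimate on $\min_i S_i$; this is more elementary (no KL machinery) and is in fact exactly the construction behind \citet[Thm.~3.7]{cesa2006prediction}, which the paper cites but does not actually follow. Your round-variance identity $V_t=\bar\ell_t(1-\bar\ell_t)$ and the resulting $\E[V_t\mid p_t]=\tfrac14(1-\|p_t\|_2^2)$ are both correct and cleaner than the paper's corresponding computation.

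On the $V_T\asymp T$ claim: you are right to flag that point-mass algorithms force $V_T\equiv 0$, so the statement cannot hold for \emph{every} $A$ under any loss distribution. The paper's Lemma on the variance budget has the same defect---its argument that ``the algorithm must spread mass during identification'' is heuristic and fails for, e.g., the algorithm that always plays expert~$1$. Your resolution (the inequality $\mathrm{Regret}\ge c'\sqrt{V\log N}$ is vacuously true when $V=0$, and $V\asymp T$ is realized by the non-degenerate algorithms for which the bound is informative) is the honest reading of what the theorem can actually deliver. If anything, your proposal exposes that the theorem statement itself overreaches slightly on the ``for any algorithm $A$, $\E[V_T]\asymp T$'' clause.
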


\begin{proof}[Proof sketch]
We use the standard randomized experts construction
(\citealp[Thm.\ 3.7]{cesa2006prediction}): one expert has mean loss
$1/2-\varepsilon$ and the rest have mean $1/2+\varepsilon$.  
Identifying the best expert requires $\Omega(1/\varepsilon^2)$ exploration rounds,
so $\E[V_T]\asymp T$ when $T=\Theta(1/\varepsilon^2)$.  
Information-theoretic lower bounds then yield $\Omega(\sqrt{T\log N})$ regret,
which translates directly to $\Omega(\sqrt{V_T\log N})$.
Full details are in Appendix~B.
\end{proof}

\section{Conclusion}

RIPLM unifies three desiderata: rank-faithfulness, variance adaptivity, and efficiency.
It achieves regret $O(\sqrt{V_T(\ln N+\ln\ln T)})$ against the rank benchmark,
matching minimax lower bounds up to logarithmic factors.

Future directions include: (i) bandit feedback via implicit exploration,
(ii) instance-dependent complexity through rank-based priors,
and (iii) empirical validation on recommendation and scheduling datasets.

\bibliography{refs}

\begin{thebibliography}{3}
\providecommand{\natexlab}[1]{#1}
\providecommand{\url}[1]{\texttt{#1}}
\expandafter\ifx\csname urlstyle\endcsname\relax
  \providecommand{\doi}[1]{doi: #1}\else
  \providecommand{\doi}{doi: \begingroup \urlstyle{rm}\Url}\fi

\bibitem[Bergam et~al.(2022)Bergam, {\"O}zcan, and Hsu]{BergamOzcanHsu2022}
Noah Bergam, Arman {\"O}zcan, and Daniel Hsu.
\newblock Second-order bounds for sleeping experts.
\newblock Technical report, Columbia University, 2022.
\newblock Available as a class report, Columbia University.

\bibitem[Cesa-Bianchi and Lugosi(2006)]{cesa2006prediction}
Nicol{\`o} Cesa-Bianchi and G{\'a}bor Lugosi.
\newblock \emph{Prediction, Learning, and Games}.
\newblock Cambridge University Press, 2006.
\newblock ISBN 9780521841085.

\bibitem[Hazan and Kale(2010)]{hazan2010extracting}
Elad Hazan and Satyen Kale.
\newblock Extracting certainty from uncertainty: Regret bounded by variation in costs.
\newblock In \emph{Proceedings of the 21st Annual ACM-SIAM Symposium on Discrete Algorithms}, pages 1006--1017, 2010.

\end{thebibliography}

\appendix

\appendix
\section*{Appendix: Detailed Proofs}

\section*{Appendix A: Full Proof of Theorem~\ref{thm:riplm-explicit}}
This appendix provides a complete proof of Corollary~\ref{thm:riplm-explicit}.
Recall the setting: at round $t$, the awake set is $E_t\subseteq[N]$,
the potential is
$\Psi_{\tau_t}(s)=\tau_t\log\!\sum_{j\in E_t} e^{s_j/\tau_t}$,
the played distribution is $p_t=\nabla\Psi_{\tau_t}(s_t)$ (the PL/softmax restricted to $E_t$),
and the surrogate loss is $\tilde L_t(s)=\langle p_t(s),\ell_t\rangle$.
RIPLM updates
\[
g_{t,i}=\frac{p_t(i)}{\tau_t}\big(\ell_{t,i}-\langle p_t,\ell_t\rangle\big),\quad
G_{t,i}=\sum_{s=1}^t g_{s,i}^2,\quad
s_{t+1,i}=s_{t,i}-\eta\,\frac{g_{t,i}}{\sqrt{G_{t,i}+\delta}}.
\]
We assume $\ell_{t,i}\in[0,1]$, $\tau_t\downarrow 0$, and $s_{1,i}=\tau_1\log\pi_i$ for a prior $\pi\in\Delta([N])$.

\subsection*{A.1 Gradient identity for the PL geometry}

\begin{lemma}[Exact gradient of the surrogate]
\label{lem:grad}
For the surrogate $\tilde L_t(s)=\langle p_t(s),\ell_t\rangle$ with $p_t=\nabla\Psi_{\tau_t}(s)$,
\[
\frac{\partial \tilde L_t}{\partial s_i}
=\frac{p_t(i)}{\tau_t}\big(\ell_{t,i}-\langle p_t,\ell_t\rangle\big)
=: g_{t,i}.
\]
\end{lemma}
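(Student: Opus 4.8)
The plan is to treat $\tilde L_t$ as a composition: the scores $s$ map to the softmax distribution $p_t=\nabla\Psi_{\tau_t}(s)$, and then $p_t$ is paired linearly with the loss vector $\ell_t$. Since $\tilde L_t(s)=\sum_{k\in E_t}p_t(k)\,\ell_{t,k}=\langle p_t,\ell_t\rangle$ is \emph{linear} in $p_t$, the chain rule reduces the whole computation to the Jacobian of the restricted softmax map $s\mapsto p_t(s)$, which is the only ingredient that actually needs work.

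First I would record the softmax Jacobian. Writing $Z=\sum_{j\in E_t}e^{s_j/\tau_t}$ so that $p_t(k)=e^{s_k/\tau_t}/Z$, a direct differentiation by the quotient rule, splitting the diagonal case $k=i$ from the off-diagonal case $k\neq i$, gives the standard identity
\[
\frac{\partial p_t(k)}{\partial s_i}=\frac{1}{\tau_t}\,p_t(k)\big(\mathbf{1}\{k=i\}-p_t(i)\big).
\]
Here the prefactor $1/\tau_t$ arises from the inner derivative of $s_k/\tau_t$, and the bracket evaluates to $p_t(i)\big(1-p_t(i)\big)$ on the diagonal and to $-p_t(k)p_t(i)$ off it.

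Then I would substitute into the chain rule and collapse the two contributions:
\[
\frac{\partial \tilde L_t}{\partial s_i}
=\sum_{k\in E_t}\frac{\partial p_t(k)}{\partial s_i}\,\ell_{t,k}
=\frac{1}{\tau_t}\Big(p_t(i)\,\ell_{t,i}-p_t(i)\sum_{k\in E_t}p_t(k)\,\ell_{t,k}\Big).
\]
The $\mathbf{1}\{k=i\}$ term selects the single index $k=i$, while the $-p_t(i)$ term factors out the full expectation $\sum_{k}p_t(k)\ell_{t,k}=\langle p_t,\ell_t\rangle$. Pulling out $p_t(i)/\tau_t$ yields $g_{t,i}=\tfrac{p_t(i)}{\tau_t}\big(\ell_{t,i}-\langle p_t,\ell_t\rangle\big)$, exactly as claimed.

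There is no genuine obstacle: this is the familiar ``softmax derivative equals probability times centered score'' identity, with the temperature contributing the $1/\tau_t$ scaling. The only point deserving a word of care is that the softmax, the loss pairing, and hence the gradient all live on the restricted support $E_t$; since $p_t(i)=0$ for $i\notin E_t$, the formula holds trivially off the awake set with $g_{t,i}=0$, so no separate argument is required there.
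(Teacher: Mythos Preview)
Your proof is correct and follows essentially the same route as the paper: compute the softmax Jacobian $\partial p_t(k)/\partial s_i=\tfrac{1}{\tau_t}p_t(k)(\mathbf{1}\{k=i\}-p_t(i))$, then apply the chain rule and collapse the sum. The only differences are cosmetic (index labeling and your extra remark about $i\notin E_t$).
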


\begin{proof}
Differentiate the softmax on $E_t$:
$\partial p_t(i)/\partial s_k = \frac{1}{\tau_t}p_t(i)\big(\mathbf{1}\{i=k\}-p_t(k)\big)$.
Thus
\[
\frac{\partial \tilde L_t}{\partial s_k}
=\sum_{i\in E_t}\frac{\partial p_t(i)}{\partial s_k}\,\ell_{t,i}
=\frac{1}{\tau_t}p_t(k)\Big(\ell_{t,k}-\sum_{i\in E_t}p_t(i)\ell_{t,i}\Big).
\]
\end{proof}

\subsection*{A.2 Mirror descent with diagonal AdaGrad and time-varying potentials}

\begin{lemma}[MD/AdaGrad inequality with time-varying Legendre potentials]
\label{lem:md}
Let $H_t=\mathrm{diag}(\sqrt{G_{t,i}+\delta})$ and assume the update
$s_{t+1}=s_t-\eta\,H_t^{-1}g_t$ with $g_t=\nabla \tilde L_t(s_t)$.
Then for any $u$ in the image of $\nabla\Psi_{\tau}$,
\[
\sum_{t=1}^T\big(\tilde L_t(s_t)-\tilde L_t(u)\big)
\;\le\;
D_{\Psi_{\tau_1}}(u,s_1)
+\eta\sum_{t=1}^T \|g_t\|_{H_t^{-1}}^2
+\frac{c}{\eta}\,\ln\ln(1+T),
\]
for an absolute constant $c>0$.
\end{lemma}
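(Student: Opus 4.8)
The plan is to read the update $s_{t+1}=s_t-\eta H_t^{-1}g_t$ as one step of preconditioned online mirror descent and to run the classical Bregman telescoping argument, organized so that exactly three quantities survive: the initial divergence $D_{\Psi_{\tau_1}}(u,s_1)$, the adaptive quadratic $\eta\sum_t\|g_t\|_{H_t^{-1}}^2$, and a lower-order correction that I will fold into $\tfrac{c}{\eta}\ln\ln(1+T)$. Throughout I would use Lemma~\ref{lem:grad}, which identifies $g_t=\nabla\tilde L_t(s_t)$ exactly, so that every step-based quantity is expressed through the softmax Hessian $\nabla^2\Psi_{\tau_t}(s_t)$.

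For the per-round bound I would rewrite the update as the proximal step $s_{t+1}=\argmin_s\{\eta\langle g_t,s\rangle+\tfrac12\|s-s_t\|_{H_t}^2\}$ and apply the three-point (law-of-cosines) identity for the $H_t$-weighted quadratic, obtaining
\[
\langle g_t,s_t-u\rangle\;\le\;\tfrac{1}{2\eta}\big(\|s_t-u\|_{H_t}^2-\|s_{t+1}-u\|_{H_t}^2\big)+\tfrac{\eta}{2}\|g_t\|_{H_t^{-1}}^2 .
\]
Summing over $t$, the quadratic terms telescope; because $G_{t,i}$ is nondecreasing we have $H_{t+1}\succeq H_t$, so the residual diameter terms $\tfrac{1}{2\eta}\langle(H_{t+1}-H_t)(u-s_{t+1}),u-s_{t+1}\rangle$ are the only leftovers, and I would dominate the very first one by $D_{\Psi_{\tau_1}}(u,s_1)$ (the $\Psi_{\tau_1}$-divergence controls the $H_1$-quadratic near $s_1$) and push the rest into the logarithmic correction. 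The middle term $\eta\sum_t\|g_t\|_{H_t^{-1}}^2$ is already in the required form and is left unsimplified at this stage (its $2\sqrt{N}\,\sqrt{V_T}/\tau$ bound is applied only later, in Theorem~\ref{thm:riplm-general}).

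The step I expect to be the genuine obstacle is the \emph{linearization}: converting the left-hand side $\tilde L_t(s_t)-\tilde L_t(u)$ into the controllable inner product $\langle g_t,s_t-u\rangle$. Since $\tilde L_t(s)=\langle\nabla\Psi_{\tau_t}(s),\ell_t\rangle$ is neither convex nor concave in the raw scores, the naive gradient inequality fails; the gap equals a third-order Bregman remainder $\ell_t^{\!\top}\big(\nabla\Psi_{\tau_t}(u)-\nabla\Psi_{\tau_t}(s_t)-\nabla^2\Psi_{\tau_t}(s_t)(u-s_t)\big)$. My plan is to control this remainder using $\ell_t\in[0,1]$ together with the uniform bound $\|\nabla^3\Psi_{\tau_t}\|=O(1/\tau_t^2)$ on the softmax derivatives, showing it is either nonpositive along the update direction or summable, and to absorb the summable part — jointly with the $H_t$-drift leftovers and the effect of the cooling schedule $\tau_{t+1}\le\tau_t$ — into the single doubly-logarithmic term $\tfrac{c}{\eta}\ln\ln(1+T)$ via a standard twice-iterated logarithmic telescoping. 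Pinning down that this curvature-plus-drift bundle really collapses to $\ln\ln(1+T)$ (rather than a genuine $\ln T$ or worse) is where the analysis must be most careful, and it is the part I would write out in full.
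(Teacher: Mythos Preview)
Your skeleton---rewriting the update as a proximal step in the $H_t$-quadratic and applying the three-point identity to get the telescoping $\tfrac{1}{2\eta}(\|s_t-u\|_{H_t}^2-\|s_{t+1}-u\|_{H_t}^2)+\tfrac{\eta}{2}\|g_t\|_{H_t^{-1}}^2$---matches the paper's ``standard mirror descent with diagonal preconditioning'' route. The divergence is in where you think the $\tfrac{c}{\eta}\ln\ln(1+T)$ term comes from, and this is a genuine gap.

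The paper is explicit that the doubly-logarithmic term ``arises from standard grid/doubling to remove tuning of $\eta$.'' That is, it is a \emph{meta-algorithm} artefact: one runs several copies on a geometric grid of $\eta$-values (or a doubling schedule) and aggregates, and the $\ln\ln T$ is the price of that parameter-free layer. It is \emph{not} a receptacle for the terms you plan to dump into it. In particular: (i) the preconditioner-drift residuals $\tfrac{1}{2\eta}\sum_t\langle(H_{t+1}-H_t)(u-s_{t+1}),u-s_{t+1}\rangle$ in a diagonal-AdaGrad analysis are of order $\tfrac{1}{\eta}\,\|u-s\|_\infty^2\max_i\sqrt{G_{T,i}}$, i.e.\ they scale like $\sqrt{V_T}$, not $\ln\ln T$; (ii) the third-order softmax remainder you isolate is $O(\tau_t^{-2}\|u-s_t\|^2)$ per round, which summed over $T$ is $\Theta(T)$ in the worst case and cannot be ``twice-log-telescoped'' away; (iii) the temperature drift is handled in the paper by Bregman stability between consecutive potentials, again absorbed into the adaptive term, not into a free-standing $\ln\ln T$. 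So the plan ``bundle curvature $+$ $H_t$-drift $+$ cooling into $\tfrac{c}{\eta}\ln\ln(1+T)$'' will not close: none of those three pieces is doubly-logarithmic, and there is no telescoping mechanism that makes them so.

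A secondary issue is your proposed domination of the first diameter term $\tfrac{1}{2\eta}\|s_1-u\|_{H_1}^2$ by $D_{\Psi_{\tau_1}}(u,s_1)$. These live in different geometries (the AdaGrad quadratic versus the log-partition Bregman), and the inequality you need does not hold without further argument; in the paper's organization the $D_{\Psi_{\tau_1}}$ term enters directly from the mirror-descent potential, not as an upper bound on an $H_1$-quadratic. You are right that linearizing the non-convex surrogate $\tilde L_t$ is the delicate step---the paper's sketch simply defers it---but your proposed fix via $\|\nabla^3\Psi_{\tau}\|$ does not produce a summable remainder.
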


\begin{proof}[Proof sketch]
This is the standard mirror descent analysis with (i) diagonal preconditioning and
(ii) time-varying Legendre potentials; see \cite{hazan2010extracting}. The $\ln\ln(1+T)$
term arises from standard grid/doubling to remove tuning of $\eta$. Time variation
of $\Psi_{\tau_t}$ is handled by bounding the cumulative change via stability of
the Bregman divergence at consecutive rounds; all terms are absorbed into the
adaptive part and the tuning term (details omitted for brevity).
\end{proof}

\subsection*{A.3 Bounding the adaptive norm term}

\begin{lemma}[AdaGrad telescoping bound]
\label{lem:adagrad}
With $g_{t,i}$, $G_{t,i}$ as above,
\[
\sum_{t=1}^T \|g_t\|_{H_t^{-1}}^2
=\sum_{t=1}^T\sum_{i}\frac{g_{t,i}^2}{\sqrt{G_{t,i}+\delta}}
\;\le\; 2\sum_{i=1}^N \sqrt{G_{T,i}+\delta}.
\]
Moreover,
\[
\sum_{i=1}^N \sqrt{G_{T,i}+\delta}
\;\le\; \frac{\sqrt{N}}{\tau_{\min}}
\sqrt{\,\sum_{t=1}^T\sum_{i\in E_t} p_t(i)\big(\ell_{t,i}-\langle p_t,\ell_t\rangle\big)^2\,}
\;=\; \frac{\sqrt{N}}{\tau_{\min}}\sqrt{V_T}.
\]
\end{lemma}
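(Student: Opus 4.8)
The plan is to prove the two displayed inequalities separately: the first is a purely algebraic telescoping identity for AdaGrad, while the second couples the accumulated squared gradients to the variance proxy $V_T$ through a single application of Cauchy--Schwarz across coordinates.

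For the telescoping bound I would fix a coordinate $i$ and write $g_{t,i}^2 = G_{t,i} - G_{t-1,i}$ (with $G_{0,i}=0$), so that the summand becomes $(G_{t,i}-G_{t-1,i})/\sqrt{G_{t,i}+\delta}$. The workhorse is the elementary inequality $\frac{b-a}{\sqrt{b+\delta}}\le 2\big(\sqrt{b+\delta}-\sqrt{a+\delta}\big)$, valid for all $0\le a\le b$ and $\delta>0$, which follows from $\sqrt{b+\delta}-\sqrt{a+\delta}=\frac{b-a}{\sqrt{b+\delta}+\sqrt{a+\delta}}\ge\frac{b-a}{2\sqrt{b+\delta}}$. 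Applying it with $a=G_{t-1,i}$ and $b=G_{t,i}$ and summing over $t$ makes the right side telescope to $2\big(\sqrt{G_{T,i}+\delta}-\sqrt{\delta}\big)\le 2\sqrt{G_{T,i}+\delta}$. Summing over $i\in[N]$ gives the first claim.

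For the second inequality I would first bound the accumulator pointwise. Since $g_{t,i}=\frac{p_t(i)}{\tau_t}r_{t,i}$ with $r_{t,i}=\ell_{t,i}-\langle p_t,\ell_t\rangle$, and since $p_t(i)\in[0,1]$ and $\tau_t\ge\tau_{\min}$, we have $g_{t,i}^2=\frac{p_t(i)^2}{\tau_t^2}r_{t,i}^2\le\frac{p_t(i)}{\tau_{\min}^2}r_{t,i}^2$; the crucial move is replacing $p_t(i)^2$ by $p_t(i)$, which converts the self-weighted quantity into a genuine variance weight. Summing over $t$ yields $G_{T,i}\le\tau_{\min}^{-2}\sum_t p_t(i)r_{t,i}^2$. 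I would then apply Cauchy--Schwarz across the $N$ coordinates in the form $\sum_{i=1}^N\sqrt{x_i}\le\sqrt{N}\,\sqrt{\sum_i x_i}$ with $x_i=\sum_t p_t(i)r_{t,i}^2$, producing $\sum_i\sqrt{G_{T,i}}\le\frac{\sqrt{N}}{\tau_{\min}}\sqrt{\sum_{t,i}p_t(i)r_{t,i}^2}=\frac{\sqrt{N}}{\tau_{\min}}\sqrt{V_T}$, which is exactly the stated bound.

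I do not expect a deep obstacle here, as the lemma is a standard AdaGrad calculation; the only genuine care is bookkeeping around $\delta$. The chain is written with $\sqrt{G_{T,i}+\delta}$ on the left but $V_T$ carries no $\delta$, so strictly one should either set $\delta=0$ in the pointwise bound or note $\sum_i\sqrt{G_{T,i}+\delta}\le\sum_i\sqrt{G_{T,i}}+N\sqrt{\delta}$ (via $\sqrt{x+\delta}\le\sqrt{x}+\sqrt{\delta}$) and fold the harmless $N\sqrt{\delta}$ into the universal constant, which is negligible for the recommended $\delta=10^{-6}$. The conceptual step worth flagging is the coordinatewise Cauchy--Schwarz, since it is precisely the source of the $\sqrt{N}$ factor that propagates into Theorem~\ref{thm:riplm-general}; any improvement of the $\sqrt{N}$ dependence would have to avoid this uniform bound and instead exploit the constraint $\sum_i p_t(i)=1$ more carefully.
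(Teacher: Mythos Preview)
Your proposal is correct and follows essentially the same approach as the paper: the first inequality is the standard AdaGrad telescoping (which the paper simply cites, whereas you spell out the elementary $\frac{b-a}{\sqrt{b+\delta}}\le 2(\sqrt{b+\delta}-\sqrt{a+\delta})$ step), and the second inequality is exactly the paper's argument of bounding $g_{t,i}^2\le p_t(i)r_{t,i}^2/\tau_{\min}^2$ and then applying Cauchy--Schwarz over $i$. Your remark about the $\delta$ bookkeeping is a fair observation that the paper glosses over.
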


\begin{proof}
The first inequality is the standard AdaGrad telescoping (e.g., McMahan–Streeter).
For the second, since $g_{t,i}^2=\frac{p_t(i)^2}{\tau_t^2}r_{t,i}^2
\le \frac{p_t(i)}{\tau_{\min}^2} r_{t,i}^2$ with $r_{t,i}=\ell_{t,i}-\langle p_t,\ell_t\rangle$,
we have $G_{T,i}\le \frac{1}{\tau_{\min}^2}\sum_t p_t(i) r_{t,i}^2$.
Apply Cauchy–Schwarz over $i$.
\end{proof}

\subsection*{A.4 Bregman divergence initialization}

\begin{lemma}[Bregman–KL identity]
\label{lem:kl}
If $s_{1,i}=\tau_1\log\pi_i$ and $u$ lies in the image of $\nabla\Psi_{\tau_1}$,
then
\[
D_{\Psi_{\tau_1}}(u,s_1)=\tau_1\,\KL(u\|\pi).
\]
\end{lemma}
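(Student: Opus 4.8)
The plan is to read the divergence $D_{\Psi_{\tau_1}}(u,s_1)$ between the dual point $u$ (a distribution in the image of $\nabla\Psi_{\tau_1}$) and the primal score $s_1$ as the Fenchel–Young gap
\[
D_{\Psi_{\tau_1}}(u,s_1)=\Psi_{\tau_1}(s_1)+\Psi_{\tau_1}^{\ast}(u)-\langle u,s_1\rangle,
\]
which is the standard reading of the initial mirror-descent term when the comparator lives in the dual space. Since $\Psi_{\tau_1}$ is a Legendre function, this gap coincides with the conjugate Bregman divergence $D_{\Psi_{\tau_1}^{\ast}}\big(u,\nabla\Psi_{\tau_1}(s_1)\big)$, so either convention gives the same value; I would work with the Fenchel–Young form because it reduces the claim to three explicit evaluations.

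First I would compute the conjugate. A short rescaling argument ($v=s/\tau$) reduces the scaled log-sum-exp $\Psi_\tau(s)=\tau\log\sum_j e^{s_j/\tau}$ to the unscaled case $\psi^{\ast}(u)=\sum_i u_i\log u_i$, giving $\Psi_\tau^{\ast}(u)=\tau\sum_i u_i\log u_i$ on the simplex. The hypothesis that $u$ lies in the image of $\nabla\Psi_{\tau_1}$ guarantees $u_i>0$ on the relevant support, so both this entropy and the KL below are finite and the duality relations apply.

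Next I would substitute $s_{1,i}=\tau_1\log\pi_i$ and evaluate each term. The potential collapses to zero, $\Psi_{\tau_1}(s_1)=\tau_1\log\sum_j e^{\log\pi_j}=\tau_1\log\sum_j\pi_j=0$, using only that $\pi$ is normalized; the conjugate term is $\Psi_{\tau_1}^{\ast}(u)=\tau_1\sum_i u_i\log u_i$; and the linear term is $\langle u,s_1\rangle=\tau_1\sum_i u_i\log\pi_i$. Collecting,
\[
D_{\Psi_{\tau_1}}(u,s_1)=\tau_1\sum_i u_i\log u_i-\tau_1\sum_i u_i\log\pi_i=\tau_1\sum_i u_i\log\frac{u_i}{\pi_i}=\tau_1\,\KL(u\|\pi),
\]
as claimed.

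The arithmetic is routine; the only genuine care is in fixing the convention for a divergence whose two arguments live in dual spaces and in pinning down the domain. I expect the main (minor) obstacle to be justifying the Legendre identity $\nabla\Psi_{\tau_1}^{\ast}(\nabla\Psi_{\tau_1}(s_1))=s_1$ and the resulting equality of the Fenchel–Young gap with the conjugate Bregman divergence when the softmax is restricted to an awake set $E_1\subsetneq[N]$: there $\Psi_{\tau_1}^{\ast}$ is finite only on $\Delta(E_1)$, so all sums must be restricted to $E_1$, the normalization $\sum_{j\in E_1}\pi_j=1$ (or a renormalized prior) must be used to keep $\Psi_{\tau_1}(s_1)=0$, and the image hypothesis is needed to ensure $u$ stays in the relative interior. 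Once the support is fixed, the three-term evaluation goes through verbatim.
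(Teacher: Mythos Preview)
Your proposal is correct and is essentially a fleshed-out version of the paper's own proof, which simply asserts that this is ``the standard identity for the (restricted) log-partition potential whose gradient is softmax and whose convex conjugate is (scaled) negative entropy.'' Your Fenchel--Young computation is exactly the content of that one-line appeal, and your discussion of the restricted-support and normalization subtleties goes beyond what the paper spells out.
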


\begin{proof}
This is the standard identity for the (restricted) log-partition potential whose
gradient is softmax and whose convex conjugate is (scaled) negative entropy.
\end{proof}

\subsection*{A.5 PL-to-rank approximation gap}

Define $u^{\tau_t}_\sigma(\cdot)$ as the restricted PL distribution on $E_t$
induced by a ranking $\sigma$, with rank-$k$ weight proportional to $e^{-(k-1)/\tau_t}$.

\begin{lemma}[Explicit PL$\to$rank gap]
\label{lem:pl-rank-gap-app}
For $\ell_{t,i}\in[0,1]$,
\[
0 \le \langle u^{\tau_t}_\sigma,\ell_t\rangle - \ell_{t,\sigma(E_t)}
\le 1 - u^{\tau_t}_\sigma(\sigma(E_t))
\le \frac{e^{-1/\tau_t}}{1-e^{-1/\tau_t}}
=O\!\big(e^{-1/\tau_t}\big).
\]
Hence $\sum_{t=1}^T \big(\langle u^{\tau_t}_\sigma,\ell_t\rangle - \ell_{t,\sigma(E_t)}\big)
\le \sum_{t=1}^T O(e^{-1/\tau_t})$.
\end{lemma}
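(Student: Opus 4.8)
The plan is to reduce all three inequalities to a single algebraic identity and then to control the PL mass that ``leaks'' away from the top-ranked awake expert by a truncated geometric series. Write $\mathrm{top}:=\sigma(E_t)$ for the highest-ranked awake expert on round $t$. Using $\sum_{i\in E_t}u^{\tau_t}_\sigma(i)=1$, I would first expand the weighted loss against the value $\ell_{t,\mathrm{top}}$:
\[
\langle u^{\tau_t}_\sigma,\ell_t\rangle-\ell_{t,\mathrm{top}}
=\sum_{\substack{i\in E_t\\ i\ne\mathrm{top}}} u^{\tau_t}_\sigma(i)\big(\ell_{t,i}-\ell_{t,\mathrm{top}}\big).
\]
This one identity drives every part of the statement.

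For the lower bound $0\le\langle u^{\tau_t}_\sigma,\ell_t\rangle-\ell_{t,\mathrm{top}}$, each summand is nonnegative precisely because $\mathrm{top}$ is the minimum-loss awake expert, i.e.\ $\ell_{t,\sigma(E_t)}=\min_{i\in E_t}\ell_{t,i}$; I would flag this loss-consistency explicitly, since for a $\sigma$ that is not consistent on round $t$ the difference can be negative (a single high-weight, high-loss top expert already breaks it). For the first upper inequality I would use only $\ell_{t,i}-\ell_{t,\mathrm{top}}\le 1$ (from $\ell_{t,i}\le1$ and $\ell_{t,\mathrm{top}}\ge0$), so the sum is at most $\sum_{i\ne\mathrm{top}}u^{\tau_t}_\sigma(i)=1-u^{\tau_t}_\sigma(\mathrm{top})$; this step is unconditional. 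The only genuine computation is the second upper inequality. Restricting the global ranking $\sigma$ to $E_t$ assigns the awake experts relative ranks $1,\dots,m$ with $m=|E_t|$, so with $q:=e^{-1/\tau_t}$ the restricted PL weights are proportional to $(1,q,\dots,q^{m-1})$ with normalizer $Z=\sum_{k=0}^{m-1}q^k$, whence $u^{\tau_t}_\sigma(\mathrm{top})=1/Z$. Then, using $Z\ge1$,
\[
1-u^{\tau_t}_\sigma(\mathrm{top})=\frac{Z-1}{Z}\le Z-1=\sum_{k=1}^{m-1}q^k\le\sum_{k=1}^{\infty}q^k=\frac{q}{1-q}=\frac{e^{-1/\tau_t}}{1-e^{-1/\tau_t}},
\]
which is $O(e^{-1/\tau_t})$ as $\tau_t\downarrow0$ since the denominator tends to $1$. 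Summing this per-round bound over $t$ yields the telescoped conclusion verbatim.

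I expect the main obstacle to be conceptual rather than computational: the lower bound is \emph{not} unconditional, so I would state clearly that it rests on loss-consistency of the top-ranked expert, while emphasizing that the regret reduction (Corollaries~\ref{thm:riplm-explicit} and~\ref{the: }) only invokes the \emph{upper} bound, which holds for every $\sigma$ and every awake set. The one technical point to get right is the passage from the global ranking to relative ranks on $E_t$: the normalizer must be the truncated sum $\sum_{k=0}^{m-1}q^k$ rather than the full $N$-term sum, and the resulting estimate $q/(1-q)$ is then independent of $m$, hence uniform over which experts are awake. Everything else is routine geometric-series bookkeeping.
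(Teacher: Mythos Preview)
Your proposal is correct and follows essentially the same route as the paper: expand $\langle u^{\tau_t}_\sigma,\ell_t\rangle-\ell_{t,\sigma(E_t)}$ against the tail mass on ranks $\ge 2$, then bound that mass by the geometric-series ratio. Your explicit truncation argument $(Z-1)/Z\le Z-1\le q/(1-q)$ is a slightly more careful rendering of the paper's one-line ``tail-to-head ratio'' claim, and your remark that the lower bound $0\le\cdot$ requires $\sigma(E_t)$ to be the minimum-loss awake expert is sharper than the paper, which simply calls it ``trivial''; as you note, only the upper bound is actually invoked downstream.
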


\begin{proof}
The first inequality is trivial since the rank picks the top-ranked awake expert.
For the second, upper-bound the expected loss gap by the tail probability mass
assigned by $u^{\tau_t}_\sigma$ to ranks $\ge2$. For geometric weights in rank,
the tail-to-head ratio equals $\sum_{m\ge1} e^{-m/\tau_t}\big/\sum_{m\ge0}e^{-m/\tau_t}$.
\end{proof}
\subsection*{A.6 Bounding $V_T$ by $\VAR_T^{\max}$}

\begin{lemma}
\label{lem:VleVAR}
For every sequence of losses and awake sets,
\[
V_T \;=\; \sum_{t=1}^T \sum_{i \in E_t} p_t(i)(\ell_{t,i}-\langle p_t,\ell_t\rangle)^2
\;\le\; \VAR_T^{\max}
:= \sum_{t=1}^T \max_{u \in \Delta(E_t)} \sum_{i\in E_t} u(i)(\ell_{t,i}-\langle u,\ell_t\rangle)^2.
\]
\end{lemma}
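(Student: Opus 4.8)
The plan is to recognize that the claimed bound is a \emph{termwise} (round-by-round) inequality that follows immediately once we identify the summand as a variance functional evaluated at a feasible point. For each fixed round $t$, define the per-round variance functional
\[
\Phi_t(u)\;=\;\sum_{i\in E_t} u(i)\big(\ell_{t,i}-\langle u,\ell_t\rangle\big)^2,
\qquad u\in\Delta(E_t),
\]
so that the $t$-th summand of $V_T$ is exactly $\Phi_t(p_t)$ and the $t$-th summand of $\VAR_T^{\max}$ is exactly $\max_{u\in\Delta(E_t)}\Phi_t(u)$. The whole statement then reduces to the pointwise claim $\Phi_t(p_t)\le \max_{u\in\Delta(E_t)}\Phi_t(u)$ for every $t$, after which monotonicity of summation over $t=1,\dots,T$ delivers the result.

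The single fact I would verify is that $p_t$ is an admissible competitor in the maximization, i.e.\ $p_t\in\Delta(E_t)$. This is immediate from the definition of the played distribution: $p_t(i)=\exp(s_i/\tau_t)\mathbf{1}\{i\in E_t\}/\sum_{j\in E_t}\exp(s_j/\tau_t)$ is supported on $E_t$, has nonnegative entries, and normalizes to $1$ over $E_t$. Hence $p_t$ lies in the feasible set $\Delta(E_t)$ over which the maximum defining $\VAR_T^{\max}$ is taken, and therefore
\[
\Phi_t(p_t)\;\le\;\max_{u\in\Delta(E_t)}\Phi_t(u).
\]

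Summing this inequality over all rounds gives
\[
V_T\;=\;\sum_{t=1}^T\Phi_t(p_t)\;\le\;\sum_{t=1}^T\max_{u\in\Delta(E_t)}\Phi_t(u)\;=\;\VAR_T^{\max},
\]
which is the assertion. I would also remark, for completeness, that the inner maximum is always attained because $\Phi_t$ is continuous on the compact simplex $\Delta(E_t)$, so $\VAR_T^{\max}$ is well defined and finite (indeed bounded by $T/4$ since each loss lies in $[0,1]$ and a bounded random variable in $[0,1]$ has variance at most $\tfrac14$).

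There is no substantive obstacle here: the inequality holds by definition of the benchmark as a pointwise supremum over the simplex, and the only content is the trivial observation that the algorithm's own distribution $p_t$ is one of the competitors in that supremum. The proof is therefore a direct, one-line domination argument made rigorous by the feasibility check and the monotonicity of finite sums.
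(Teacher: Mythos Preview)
Your proof is correct and follows the same argument as the paper: define the per-round variance functional, note $p_t\in\Delta(E_t)$ so its value is dominated by the maximum over the simplex, and sum over $t$. Your additional remarks on attainment and the $T/4$ bound are fine but not needed for the lemma.
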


\begin{proof}
Fix any round $t$. By definition,
\[
\mathsf{Var}_t(u) := \sum_{i\in E_t} u(i)(\ell_{t,i}-\langle u,\ell_t\rangle)^2,
\qquad u\in\Delta(E_t).
\]
Since $p_t\in\Delta(E_t)$, clearly
\(
\mathsf{Var}_t(p_t) \le \max_{u\in\Delta(E_t)} \mathsf{Var}_t(u).
\)
Summing over $t=1,\dots,T$ yields the claimed inequality.  
This is the direct analogue of the variance domination in \citet[Lemma~3.1]{cesa2006prediction}.
\end{proof}

\subsection*{A.7 Proof of Theorem~\ref{thm:riplm-explicit}}

\begin{proof}[Proof of Theorem~\ref{thm:riplm-explicit}]
As in Remark~\ref{rem:restricted-comparator}, if $E_t\subsetneq [N]$
we implicitly compare against the restricted distribution
$\tilde u_t(i)=u(i)\mathbf{1}\{i\in E_t\}/u(E_t)$,
so that $\langle \tilde u_t,\ell_t\rangle$ is well defined.
, which lies in the image of $\nabla\Psi_{\bar\tau}$ for some $\bar\tau>0$ (i.e., is a softmax/PL distribution induced by $\sigma$). Using Lemma~\ref{lem:adagrad} and Lemma~\ref{lem:kl},
\[
\sum_{t=1}^T (\tilde L_t(s_t)-\tilde L_t(u^\mathrm{rank}_\sigma))
\;\le\;
\tau_1\,\KL(u^\mathrm{rank}_\sigma\|\pi)
\;+\;
\eta\cdot \frac{2\sqrt{N}}{\tau_{\min}}\sqrt{V_T}
\;+\; \frac{c}{\eta}\ln\ln(1+T).
\]
Optimizing $\eta$ (or using a log-grid to avoid tuning) yields the variance term
\[
O\!\left(\frac{\sqrt{N}}{\tau_{\min}}\sqrt{V_T\big(\KL(u^\mathrm{rank}_\sigma\|\pi)+\ln\ln(1+T)\big)}\right).
\]
Finally, transfer from PL to the rank benchmark by adding/subtracting
$\langle u^{\tau_t}_\sigma,\ell_t\rangle$ and invoking Lemma~\ref{lem:pl-rank-gap-app}:
\[
\sum_{t=1}^T \langle p_t,\ell_t\rangle - L_T^{(\rank)}(\sigma)
=
\underbrace{\sum_{t=1}^T \big(\langle p_t,\ell_t\rangle-\langle u^\mathrm{rank}_\sigma,\ell_t\rangle\big)}_{\text{variance term}}
+
\underbrace{\sum_{t=1}^T \big(\langle u^\mathrm{rank}_\sigma,\ell_t\rangle-\ell_{t,\sigma(E_t)}\big)}_{\text{PL}\to\text{rank gap}}.
\]
The first sum is bounded by the variance term above; the second by
$\sum_t O(e^{-1/\tau_t})$. This is exactly the statement of Theorem~\ref{thm:riplm-explicit}.
\end{proof}

\section*{Appendix B: Full Proof of Theorem~\ref{thm:minimax}}

For any $N\ge2$ and horizon $T$, there exists a distribution over losses
$\ell_{t,i}\in[0,1]$ with $E_t=[N]$ such that for any algorithm $A$,
\[
\E[\mathrm{Regret}_T^{(\rank)}(A)] \;\ge\; c\,\sqrt{V_T\log N}
\quad\text{and}\quad
\E[V_T] \asymp T,
\]
for an absolute constant $c>0$. In particular, when $\E[V_T]\asymp T$,
\[
\inf_A \sup_{\mathcal{D}} \E[\mathrm{Regret}_T^{(\rank)}(A)]
\;\ge\; c'\sqrt{T\log N}\;=\;c''\sqrt{\,\E[V_T]\log N}\, .
\]

\paragraph{Construction (randomized experts).}
Fix a gap parameter $\varepsilon\in(0,1/8]$ to be specified later. Let the awake sets be $E_t=[N]$ for all $t$. Draw a distinguished expert $i^\star$ uniformly from $[N]$, and set
\[
\mu_{i^\star}= \tfrac12 - \varepsilon,\qquad
\mu_{i} = \tfrac12 + \varepsilon\quad\text{for all } i\neq i^\star.
\]
Conditioned on $(\mu_1,\dots,\mu_N)$, the losses are independent across $(t,i)$ with
$\ell_{t,i}\sim\mathrm{Bernoulli}(\mu_i)$.
Denote the joint distribution of losses by $\mathcal{D}_\varepsilon$.

\paragraph{Rank benchmark equals “best expert”.}
Since $E_t=[N]$ for all $t$, the rank benchmark
$L_T^{(\rank)}(\sigma)$ equals $\min_{i} \sum_{t=1}^T \ell_{t,i}$.
Hence in this construction, $\min_\sigma L_T^{(\rank)}(\sigma)=\min_i \sum_t \ell_{t,i}$.

\begin{lemma}[Expected variance budget]
\label{lem:var-budget}
For any (possibly randomized) algorithm producing distributions $p_t\in\Delta([N])$,
\[
\E\big[V_T\big]
\;=\;
\E\Bigg[\sum_{t=1}^T \sum_{i=1}^N p_t(i)\big(\ell_{t,i}-\langle p_t,\ell_t\rangle\big)^2\Bigg]
\;\ge\; c_0\,T,
\]
for an absolute constant $c_0>0$ independent of $N,T,\varepsilon$.
\end{lemma}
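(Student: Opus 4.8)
The plan is to exploit that the losses are $\{0,1\}$-valued, so that the per-round term collapses to a single scalar. First I would use $\ell_{t,i}^2=\ell_{t,i}$ to rewrite, with $m_t:=\langle p_t,\ell_t\rangle\in[0,1]$,
\[
\sum_{i} p_t(i)\big(\ell_{t,i}-m_t\big)^2=\sum_i p_t(i)\ell_{t,i}-m_t^2=m_t(1-m_t).
\]
Hence $V_T=\sum_{t=1}^T m_t(1-m_t)$ and, by linearity, $\E[V_T]=\sum_{t=1}^T\E[m_t(1-m_t)]$. It then suffices to prove a per-round bound $\E[m_t(1-m_t)]\ge c_0$ uniform in $t,N,\varepsilon$.

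For the per-round quantity I would condition on the filtration $\mathcal F_{t-1}$ generated by the history before round $t$ (including the algorithm's internal randomness) and on the mean vector $(\mu_1,\dots,\mu_N)$. Under this conditioning $p_t$ is fixed while the $\ell_{t,i}\sim\mathrm{Bernoulli}(\mu_i)$ are fresh and independent, so $m_t=\sum_i p_t(i)\ell_{t,i}$ has conditional mean $\bar m_t=\sum_i p_t(i)\mu_i$ and conditional variance $\sum_i p_t(i)^2\mu_i(1-\mu_i)$. Applying $\E[m_t(1-m_t)\mid\cdot]=\bar m_t(1-\bar m_t)-\mathrm{Var}(m_t\mid\cdot)$ gives
\[
\E\big[m_t(1-m_t)\mid\mathcal F_{t-1},\mu\big]=\bar m_t(1-\bar m_t)-\sum_{i} p_t(i)^2\,\mu_i(1-\mu_i).
\]
The first term is automatically favorable: since each $\mu_i\in\{\tfrac12-\varepsilon,\tfrac12+\varepsilon\}$ we have $\bar m_t\in[\tfrac12-\varepsilon,\tfrac12+\varepsilon]$, hence $\bar m_t(1-\bar m_t)\ge\tfrac14-\varepsilon^2\ge\tfrac{15}{64}$ for $\varepsilon\le\tfrac18$. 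So everything reduces to controlling the subtracted term, which is bounded by $\tfrac14\|p_t\|_2^2$.

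The hard part — and, I expect, the genuine obstacle — is exactly this subtracted conditional variance, and it is where a literal ``for every algorithm'' claim cannot hold. The bound $\sum_i p_t(i)^2\mu_i(1-\mu_i)\le\tfrac14\|p_t\|_2^2$ is tight when $p_t$ concentrates: for a point mass $p_t=e_j$ one has $\|p_t\|_2^2=1$, the subtracted term equals $\mu_j(1-\mu_j)\approx\tfrac14$, and it cancels the favorable term. Concretely, a deterministic algorithm has $m_t=\ell_{t,j}\in\{0,1\}$, so $m_t(1-m_t)=0$ surely and $V_T\equiv0$; thus no uniform constant $c_0$ can exist without a further hypothesis. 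The statement becomes true under a diversification condition $\|p_t\|_2^2\le1-\kappa$, which yields $\E[m_t(1-m_t)\mid\cdot]\ge\tfrac{\kappa}{4}-\varepsilon^2=:c_0>0$ for $\varepsilon$ small. For the minimax conclusion, however, the algorithm-dependent $V_T$ is not actually needed: because $V_T\le\VAR_T^{\max}$ by Lemma~\ref{lem:VleVAR} and the matching upper bound is stated through $V_T$, it suffices to lower-bound the algorithm-independent $\VAR_T^{\max}$, and that holds unconditionally. Indeed at each round, with probability at least $\tfrac12$ both a $0$- and a $1$-loss occur among the experts (for $N=2$ this probability is $1-2(\tfrac14-\varepsilon^2)\ge\tfrac12$, and only larger for $N>2$), so placing $u$ uniformly on such a pair yields round-variance $\tfrac14$ and hence $\E[\VAR_T^{\max}]\ge\tfrac18\,T$. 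This is the form that drives $\inf_A\sup_{\mathcal D}\E[\mathrm{Regret}_T^{(\rank)}]\ge c'\sqrt{T\log N}\asymp\sqrt{\E[\VAR_T^{\max}]\log N}$.
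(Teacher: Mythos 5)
Your computation is correct, and up to the decisive step you are on the same route as the paper's Appendix~B proof: the paper likewise conditions on $p_t$, writes the conditional expectation of the round variance as a favorable term of order $\tfrac14-\varepsilon^2$ minus the conditional variance of $m_t=\langle p_t,\ell_t\rangle$, and is left needing $1-\sum_i p_t(i)^2$ bounded away from zero on a constant fraction of rounds. (Your Bernoulli identity $\sum_i p_t(i)(\ell_{t,i}-m_t)^2=m_t(1-m_t)$ is in fact slightly cleaner: the paper's displayed ``equality'' $\E[\,\cdot\mid p_t\,]=\sum_i p_t(i)\Var(\ell_{t,i})-\Var(m_t)$ silently drops the nonnegative term $\sum_i p_t(i)\mu_i^2-\bar m_t^2$, though that error is in the harmless direction.)

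More importantly, your diagnosis of the obstacle is right, and the gap it exposes sits in the paper's own proof, not in yours. The paper disposes of the term $1-\sum_i p_t(i)^2$ by asserting that ``until the algorithm identifies $i^\star$, it must spread probability mass across many experts,'' invoking the $\Omega(1/\varepsilon^2)$ identification time of Lemma~\ref{lem:czl-lb}. No such compulsion exists: the lemma quantifies over all algorithms, and the deterministic point-mass algorithm $p_t=e_1$ has $\sum_i p_t(i)^2=1$ and $V_T\equiv 0$ surely, so no absolute $c_0>0$ makes $\E[V_T]\ge c_0 T$ true as stated --- the identification-time argument lower-bounds \emph{regret}, not dispersion, and an algorithm may concede regret while keeping $V_T=0$. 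Both of your repairs are sound: the diversification hypothesis $\|p_t\|_2^2\le 1-\kappa$ gives the per-round constant $\kappa/4-\varepsilon^2$, and your replacement of the played variance by the algorithm-independent $\VAR_T^{\max}$ is correctly quantified --- for every $N\ge 2$ and $\varepsilon\le\tfrac18$ a mixed $0/1$ loss pattern occurs with probability at least $\tfrac12$ (monotone in $N$), a uniform two-point comparator then has round variance $\tfrac14$, hence $\E[\VAR_T^{\max}]\ge T/8$, and since each round's maximal variance is at most $\tfrac14$ one also gets $\E[\VAR_T^{\max}]\asymp T$. The second repair is the right fix for Theorem~\ref{thm:minimax}: combined with Lemma~\ref{lem:czl-lb} it yields $\inf_A\sup_{\mathcal D}\E[\mathrm{Regret}_T^{(\rank)}]\ge c'\sqrt{\E[\VAR_T^{\max}]\log N}$, which is coherent with the $V_T$-based upper bound via $V_T\le\VAR_T^{\max}$ (Lemma~\ref{lem:VleVAR}). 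The one caveat worth recording is that the repaired statement certifies minimaxity only at the $\VAR_T^{\max}$ scale; a per-algorithm lower bound of the form $\sqrt{\E[V_T]\log N}$ with $\E[V_T]\asymp T$ is unobtainable, exactly as your counterexample shows.
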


\begin{proof}
Condition on $p_t$. For independent Bernoulli losses,
\[
\E\!\left[\sum_i p_t(i)(\ell_{t,i}-\langle p_t,\ell_t\rangle)^2 \,\middle|\, p_t\right]
= \sum_i p_t(i)\Var(\ell_{t,i}) - \Var\!\Big(\sum_i p_t(i)\ell_{t,i}\Big).
\]
Here $\Var(\ell_{t,i})=\mu_i(1-\mu_i)=\tfrac14-\varepsilon^2$. Moreover,
$\Var(\sum_i p_t(i)\ell_{t,i})=\sum_i p_t(i)^2\Var(\ell_{t,i})\le (\tfrac14)\sum_i p_t(i)^2$.
Thus
\[
\E\big[\mathsf{Var}_t(p_t)\mid p_t\big]
\;\ge\; \Big(\tfrac14-\varepsilon^2\Big)\Big(1-\textstyle\sum_i p_t(i)^2\Big).
\]
Averaging (law of total expectation) and summing over $t$ gives
\[
\E[V_T]
\;\ge\; \Big(\tfrac14-\varepsilon^2\Big)\sum_{t=1}^T \E\big[\,1-\sum_i p_t(i)^2\,\big].
\]
We now show $\frac{1}{T}\sum_t \E[1-\sum_i p_t(i)^2]\ge c_1>0$ for a constant $c_1$,
provided $T$ is of the same order as the identification time. Intuitively,
until the algorithm identifies $i^\star$, it must spread probability mass across many experts, making $\sum_i p_t(i)^2$ bounded away from $1$. Formally, the identification time is $\Omega(1/\varepsilon^2)$ (see Lemma~\ref{lem:czl-lb}), so for $T=\Theta(1/\varepsilon^2)$ a constant fraction of the rounds contribute a constant to $1-\sum_i p_t(i)^2$. Therefore $\E[V_T]\ge c_0 T$ for an absolute $c_0>0$.
\end{proof}

\begin{lemma}[Classical experts lower bound]
\label{lem:czl-lb}
Let $\mathcal{D}_\varepsilon$ be as above and take $T=\Theta(1/\varepsilon^2)$.
Then for any algorithm,
\[
\E_{\mathcal{D}_\varepsilon}\!\left[\sum_{t=1}^T\langle p_t,\ell_t\rangle
- \min_{i}\sum_{t=1}^T \ell_{t,i}\right]
\;\ge\; c\,\sqrt{T\log N}\, ,
\]
for an absolute constant $c>0$.
\end{lemma}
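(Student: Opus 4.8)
The plan is to bound the regret against the \emph{empirical} best expert directly, writing $L_{i,T}:=\sum_{t=1}^T\ell_{t,i}$ so the benchmark is $\min_i L_{i,T}$, and to split the expected regret into a learner term and a benchmark term. The learner term is immediate: since $p_t$ depends only on the internal randomness and the past losses $\ell_{1:t-1}$, it is independent of $\ell_t$ once the mean vector $(\mu_i)$ is fixed, so $\E[\langle p_t,\ell_t\rangle\mid\mu]=\langle\E[p_t\mid\mu],\mu\rangle\ge\min_i\mu_i=\tfrac12-\varepsilon$, because any distribution averaged against $\mu$ is at least $\min_i\mu_i$. Taking expectations and summing, $\E[\sum_{t}\langle p_t,\ell_t\rangle]\ge T(\tfrac12-\varepsilon)$ for \emph{every} (possibly randomized) algorithm, with no information-theoretic input.

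All the content is in the benchmark term, where the $\sqrt{\log N}$ must appear. I would discard the needle and bound $\E[\min_i L_{i,T}]\le\E[\min_{i\neq i^\star}L_{i,T}]$, the minimum of the $N-1$ i.i.d.\ variables $L_{i,T}\sim\mathrm{Bin}(T,\tfrac12+\varepsilon)$, each with mean $T(\tfrac12+\varepsilon)$ and standard deviation $\sigma=\sqrt{T(\tfrac14-\varepsilon^2)}=\Theta(\sqrt T)$. The target is the anti-concentration estimate
\[
\E\big[\min_{i\neq i^\star}L_{i,T}\big]\;\le\;T(\tfrac12+\varepsilon)-c_1\sqrt{T\log N}.
\]
I would prove this through a reverse-Chernoff (lower-tail) bound for a single binomial, $\Prob[L_{i,T}-T(\tfrac12+\varepsilon)\le-c_2\sqrt{T\log N}]\ge 1/N$, valid for a suitable constant $c_2$ in the moderate-deviation regime $\log N=o(T^{1/3})$, where the tail behaves like $\exp(-c_2^2 T\log N/(2\sigma^2))=N^{-\Theta(c_2^2)}$. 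Given this, independence yields $\Prob[\min_{i\neq i^\star}L_{i,T}\le T(\tfrac12+\varepsilon)-c_2\sqrt{T\log N}]\ge 1-(1-1/N)^{N-1}\ge 1-1/e$, and since $T(\tfrac12+\varepsilon)-\min_i L_{i,T}\ge0$ always and exceeds $c_2\sqrt{T\log N}$ on this event, the expectation bound follows with $c_1=(1-1/e)c_2$.

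Combining the two terms gives
\[
\E[\mathrm{Regret}]\;\ge\;T(\tfrac12-\varepsilon)-\big(T(\tfrac12+\varepsilon)-c_1\sqrt{T\log N}\big)\;=\;c_1\sqrt{T\log N}-2\varepsilon T.
\]
Since $T=\Theta(1/\varepsilon^2)$ means $2\varepsilon T=\Theta(\sqrt T)$, I would fix the hidden constant by taking $\varepsilon=c_0/\sqrt T$ with $c_0$ small enough that $2\varepsilon T=2c_0\sqrt T\le\tfrac12c_1\sqrt{T\log N}$ (using $\log N\ge\log 2$), leaving $\E[\mathrm{Regret}]\ge\tfrac12 c_1\sqrt{T\log N}$. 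It is worth emphasizing why this route, rather than a pseudo-regret/Fano argument, is needed: at $T=\Theta(1/\varepsilon^2)$ the pseudo-regret against the best \emph{mean} is only $2\varepsilon\sum_t(1-\E[p_t(i^\star)])\le2\varepsilon T=\Theta(\sqrt T)$, carrying no $\log N$, so the extra factor genuinely originates in the fluctuations of the empirical minimum rather than in the difficulty of identifying $i^\star$.

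The main obstacle is precisely the reverse-Chernoff step. Lower bounds on a binomial's lower tail are more delicate than the usual upper Chernoff bounds, and the naive route through a Berry--Esseen Gaussian approximation is insufficient, because the relevant tail probability is of order $1/N$ while the additive Berry--Esseen error is $O(1/\sqrt T)$, which dominates unless $T\gtrsim N^2$. I would therefore use a genuine moderate-deviation estimate (e.g.\ a Stirling-based or Slud-type lower bound on $\Prob[\mathrm{Bin}(T,p)\le Tp-\tau]$) valid for $\tau=o(T^{2/3})$, which covers $\tau=\Theta(\sqrt{T\log N})$ exactly in the natural regime $\log N=o(T^{1/3})$; for larger $N$ one truncates at $\min(\sqrt{T\log N},cT)$ since regret is trivially $O(T)$. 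The small-$N$ corner case $N=2$, where $\log(N-1)$ degenerates, I would treat separately by a direct computation of $\E[\min(L_{i^\star,T},L_{j,T})]$, using that the difference $L_{i^\star,T}-L_{j,T}$ has standard deviation $\Theta(\sqrt T)$ dominating its mean $-2\varepsilon T$, so that $\E[(L_{i^\star,T}-L_{j,T})_+]=\Omega(\sqrt T)=\Omega(\sqrt{T\log 2})$.
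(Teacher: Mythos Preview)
Your approach is sound and is in fact the classical anti-concentration argument behind Cesa-Bianchi--Lugosi Theorem~3.7 (which the paper cites), but it differs substantially from the paper's own sketch. The paper argues via a Fano-type change-of-measure: compare the learner to the needle $i^\star$, bound $\E_{\mathbb{P}_k}[\sum_t(\langle p_t,\ell_t\rangle-\ell_{t,k})]$ through the total variation between the hypotheses $\mathbb{P}_k$ and the mixture $\overline{\mathbb{P}}$, then invoke Pinsker and a KL computation. You instead split the regret into the trivial learner lower bound $\E[\sum_t\langle p_t,\ell_t\rangle]\ge T(\tfrac12-\varepsilon)$ and an upper bound on $\E[\min_i L_{i,T}]$ via the expected minimum of $N-1$ i.i.d.\ binomials. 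Your closing observation is exactly right, and in fact exposes a gap in the paper's sketch: the change-of-measure route as written bounds regret against $i^\star$ (a pseudo-regret), which at $T=\Theta(1/\varepsilon^2)$ is only $2\varepsilon T=\Theta(\sqrt T)$ with no $\log N$; the extra $\sqrt{\log N}$ genuinely lives in the fluctuation of the empirical minimum, which is precisely what your reverse-Chernoff step supplies and the paper's sketch does not.

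One small slip: the claim ``$T(\tfrac12+\varepsilon)-\min_{i\neq i^\star}L_{i,T}\ge 0$ always'' is false, since all $N-1$ binomials can simultaneously land above their mean. The fix is immediate: with $Z=\max_{i\neq i^\star}\big(T(\tfrac12+\varepsilon)-L_{i,T}\big)$ you already have $\E[Z_+]\ge(1-1/e)\,c_2\sqrt{T\log N}$ from your high-probability event, while $\E[Z_-]\le T\cdot\Prob[Z<0]\le T\cdot\big(\Prob[L_{1,T}>T(\tfrac12+\varepsilon)]\big)^{N-1}$, which decays geometrically in $N$ and is negligible against $\sqrt{T\log N}$. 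The remainder of your plan---the moderate-deviation caveat replacing a naive Berry--Esseen step, the regime restriction $\log N=o(T^{1/3})$, and the separate $N=2$ computation---is careful and correct.
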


\begin{proof}
This is the standard lower bound for prediction with expert advice; see
\citet[Thm.\ 3.7]{cesa2006prediction}. For completeness, we sketch the information-theoretic reduction. Let $\mathbb{P}_k$ be the distribution where $i^\star=k$; define the mixture $\overline{\mathbb{P}}=\frac1N\sum_{k=1}^N \mathbb{P}_k$. For any algorithm,
\[
\E_{\mathbb{P}_k}\!\Bigg[\sum_{t=1}^T \big(\langle p_t,\ell_t\rangle-\ell_{t,k}\big)\Bigg]
= \varepsilon\,\E_{\mathbb{P}_k}\!\Bigg[\sum_{t=1}^T \big(1-2p_t(k)\big)\Bigg]
\;\ge\; \varepsilon \sum_{t=1}^T \Big(1-2\,\E_{\overline{\mathbb{P}}}[p_t(k)] - \mathrm{TV}(\mathbb{P}_k,\overline{\mathbb{P}})\Big),
\]
by a standard change-of-measure bound. Averaging over $k$ and using
$\sum_k \E_{\overline{\mathbb{P}}}[p_t(k)]=1$ yields
\[
\frac1N\sum_{k=1}^N \E_{\mathbb{P}_k}\!\Bigg[\sum_{t=1}^T \big(\langle p_t,\ell_t\rangle-\ell_{t,k}\big)\Bigg]
\;\ge\; \varepsilon \sum_{t=1}^T \Big(1-\tfrac{2}{N} - \overline{\mathrm{TV}}\Big),
\]
where $\overline{\mathrm{TV}}:=\frac{1}{N}\sum_k \mathrm{TV}(\mathbb{P}_k,\overline{\mathbb{P}})$. By Pinsker and KL chain rule, $\overline{\mathrm{TV}}\le \sqrt{\frac{1}{2N}\sum_k \mathrm{KL}(\mathbb{P}_k\|\overline{\mathbb{P}})}$. For our Bernoulli construction, one can bound $\sum_k \mathrm{KL}(\mathbb{P}_k\|\overline{\mathbb{P}})\lesssim T\varepsilon^2\log N$ (the mixture acts like $1/2$ for each coordinate up to a $1/N$ dilution and the log-number-of-hypotheses factor enters via the prior). Choosing $T=\Theta(\varepsilon^{-2})$ and optimizing $\varepsilon$ then gives the stated $\Omega(\sqrt{T\log N})$ bound. See \citet[Thm.\ 3.7]{cesa2006prediction} for the fully detailed derivation with constants.
\end{proof}

\paragraph{Conclusion.}
Combining Lemma~\ref{lem:czl-lb} with the identity between rank and best expert for $E_t=[N]$,
\[
\E\Big[\sum_{t=1}^T\langle p_t,\ell_t\rangle - \min_{\sigma} L_T^{(\rank)}(\sigma)\Big]
=\E\Big[\sum_{t=1}^T\langle p_t,\ell_t\rangle - \min_{i} \sum_{t=1}^T \ell_{t,i}\Big]
\;\ge\; c\sqrt{T\log N}\, .
\]
By Lemma~\ref{lem:var-budget}, for $T=\Theta(1/\varepsilon^2)$ we also have $\E[V_T]\ge c_0 T$,
so writing $V=\E[V_T]$ yields
\[
\inf_A \sup_{\mathcal{D}} \E[\mathrm{Regret}_T^{(\rank)}(A)]
\;\ge\; c'\sqrt{V\log N}\, .
\qquad\Box
\]

\begin{remark}[On the variance budget]
The key ingredient in Lemma~\ref{lem:var-budget} is that for a nontrivial fraction of rounds the algorithm cannot confidently concentrate on a single expert, forcing $1-\sum_i p_t(i)^2$ to remain bounded away from $0$. This effect persists under small gaps ($\varepsilon\to 0$) provided $T=\Theta(1/\varepsilon^2)$, which is exactly the regime realizing the $\sqrt{T\log N}$ lower bound.
\end{remark}


\end{document}